\theoremstyle{plain}
\newtheorem{theorem}{Theorem}[section]
\newtheorem{proposition}[theorem]{Proposition}
\newtheorem{lemma}[theorem]{Lemma}
\newtheorem{corollary}[theorem]{Corollary}
\theoremstyle{definition}
\theoremstyle{remark}
\newcommand\eqdef{\ensuremath{\stackrel{\rm def}{=}}} 
\begin{document}

\title{Contextual Reliability: When Different Features Matter in Different Contexts}







\author[1*]{Gaurav Ghosal}
\author[2*]{Amrith Setlur}
\author[3]{Daniel S. Brown}
\author[1]{Anca D. Dragan}
\author[2]{Aditi Raghunathan}
\affil[1]{University of California, Berkeley}
\affil[2]{Carnegie Mellon University}
\affil[3]{University of Utah}
\affil[*]{denotes equal contribution}
\date{}
\vskip 0.3in

\maketitle
\begin{abstract}

Deep neural networks often fail catastrophically by relying on spurious correlations. Most prior work assumes a clear dichotomy into spurious and reliable features; however, this is often unrealistic. For example, most of the time we do not want an autonomous car to simply copy the speed of surrounding cars---we don't want our car to run a red light if a neighboring car does so. However, we cannot simply enforce invariance to next-lane speed, since it could provide valuable information about an unobservable pedestrian at a crosswalk. Thus, universally ignoring features that are sometimes (but not always) reliable can lead to non-robust performance. We formalize a new setting called \textit{contextual reliability} which accounts for the fact that the ``right'' features to use may vary depending on the context. We propose and analyze a two-stage framework called Explicit Non-spurious feature Prediction (ENP) which first identifies the relevant features to use for a given context, then trains a model to rely exclusively on these features. Our work theoretically and empirically demonstrates the advantages of ENP over existing methods and provides new benchmarks for contextual reliability.
\end{abstract}

\newcommand\wctestacc{\ensuremath{\text{Err}_\text{worst-case}}}

\section{Introduction}
Despite remarkable performance on benchmarks, deep neural networks often fail catastrophically when deployed under slightly different conditions than they were trained on. Such failures are commonly attributed to the model relying on ``spurious'' features (\eg background) rather than ``non-spurious'' features that remain reliably predictive even for out of distribution inputs. Prior work has focused on learning models that rely exclusively on non-spurious features. 
\begin{figure*}
    \centering
    \includegraphics[width = 1.0\textwidth]{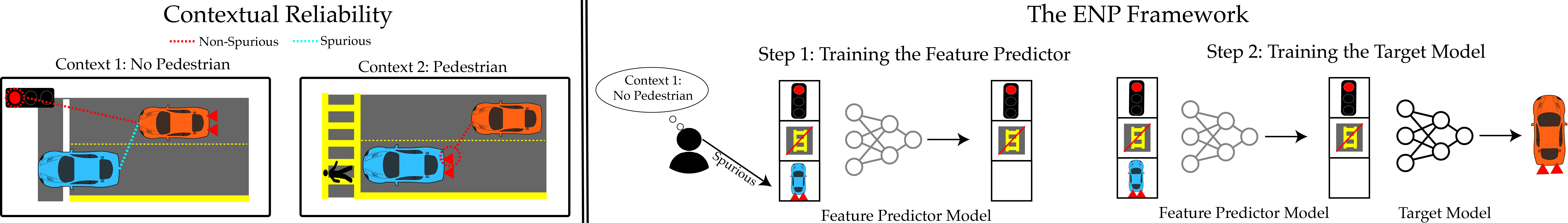}
    \caption{\textbf{Our Setting and Proposed Framework.} (\textbf{Left Panel}) Typically, it is dangerous for an autonomous car to get influenced by cars in the next lane---you do not want to run a red light if the car next to you does. Prior work provides methods to be invariant to the speed of neighboring cars. However, in the context of a pedestrian crossing, such an invariance is dangerous. The braking of the neighboring car can provide valuable information about a potential pedestrian. (\textbf{Right Panel}) We propose a two-stage framework for achieving reliable performance. In the first stage, a feature prediction model is trained to predict the set of non-spurious features from human annotations. In the second stage, we train a target model that is invariant to predicted spurious features. }

    \label{fig:my_label}
\end{figure*}

We argue that such a neat delineation of features as ``non-spurious'' and ``spurious'' is often unrealistic. As an example, consider autonomous driving. In some cases, it can be dangerous to rely on the speed of cars in the neighboring lane. A neighboring car running a red light should not cause the agent to dangerously run a red light as well, and a neighboring car slowing to turn left should not cause an agent going straight to slow down. At a crosswalk, however, the braking of a neighboring car carries evidence of an unobserved pedestrian and should be treated as a cause for stopping. Thus, we observe that neighboring-lane speed cannot be globally treated either as spurious or non-spurious, it must be used or ignored \textit{depending on the context}. Similarly, image backgrounds are often considered spurious in prior robustness research. However, we contend that there are contexts where the background is useful and should not be ignored, for example, when the foreground is occluded or ambiguous. In fact, humans often use the background to identify objects in such situations~\cite{torralba2003contextual}.

In this work, we propose and study \emph{contextual reliability}, a new setting that better captures the above nuances of real-world settings. We assume that the data comes from different latent contexts, each of which has a potentially different designation of spurious and non-spurious features. Thus, in contrast to prior settings, the optimal robust predictor might need to rely on different features in different contexts.

How to improve contextual reliability?
The predominant approach to improving robustness enforces invariance across a fixed set of spurious features \cite{muandet2013domain, NIPS2011_b571ecea, gulrajani2020search,rosenfeld2021risks} but such a set need not exist in our setting. The alternative active learning approaches do not require a predefined set of features, but actively source maximally uncertain data with the hope of breaking spurious correlations. These approaches also fail to address contextual reliability due to poor uncertainty estimates when different features are spurious across different contexts. Finally, robust optimization based approaches also fail to control for performance across multiple contexts without explicit information about the latent contexts of different training points. This suggests that we need \emph{some additional information about the latent context} in order to improve contextual reliability.

Having established the necessity of context information, we consider the problem of harnessing it effectively. 
We propose a framework called \emph{Explicit Non-spurious feature Prediction} (ENP). Ideally, we want to train a model that is invariant to the context-dependent spurious features. Rather than expecting end-to-end training to implicitly uncover contexts and respect their feature invariances, ENP employs a divide and conquer approach whereby the model first explicitly predicts the context before making a final prediction that respects the predicted context's invariances. To provide explicit supervision for the first identification step, we augment a \emph{small} fraction of the training set with explicit annotations (termed \emph{feature annotations}) on what features the optimal robust predictor should rely on. We analytically compare ENP to a variety of baselines (both with and without context information) in a simple linear setting. This allows us to precisely characterize the conditions when context information is helpful, and how different approaches to incorporating context compare. We also confirm these findings via simulations on linear models and neural networks. 

Finally, we consider a variety of semi-synthetic and real-world datasets that require contextual reliability ranging from control to image classification and motion forecasting with real-world autonomous vehicle data from the Wayo Open Motion Dataset (WOMD) \cite{ettinger2021large}. On WOMD, we make use of crowd-sourced human annotations of vehicle spuriousness provided in \cite{causalagents}. ENP offers consistent gains over baselines across \emph{all} these settings, offerings gains of around $15\%$ in control environments and $6\%$ in image classification, and $5\%$ on WOMD. Ultimately, we hope that our setting of contextual reliability and proposed ENP method serve as a setting and benchmark for addressing this important real-world challenge.

\section{Related Works}
\label{relworks}

\textbf{Robustness in supervised learning.} Prior works in machine learning have investigated various distribution shift types: subpopulation shifts~\cite{hu2018does,sagawa2019distributionally}, input perturbations and adversarial shifts~\cite{goodfellow2014explaining,raghunathan2018certified}, and generalization to new domains~\cite{gulrajani2020search}. 
Our setting of contextual reliability is most closely related to subpopulation shifts and one line of work to address this is Distributionally Robust Optimization (DRO)~\cite{duchi2019distributionally,liu2014robust}. DRO optimizes the worst-subpopulation performance, which can be over-conservative and statistically inefficient~\cite{hu2018does}.
A slightly different setting is domain generalization, where the goal is to learn a predictor that extrapolates to unseen subpopulations~\cite{li2018learning}. Usually, assumptions about the relationship between domains are made in order to allow the robust predictor to be reliably identified~\cite{muandet2013domain}. Our setting of contextual reliability is similar to domain generalization in that the goal is to be optimally robust on every context, however, we do not consider the task of extrapolating to new contexts and do not rely on end-to-end training. Rather, we explicitly infer the context and enforce invariance to the spurious features in each context.

\textbf{Robustness in imitation learning.}
Imitation learning  naturally suffers from distribution shift: during training, the distribution of observed states arises from the expert policy, while in testing it arises from the learned policy \cite{ross2011reduction,tien2023causal}. As a result, prior works have contributed methods for achieving robustness to these shifts. \cite{de2019causal} propose learning the true causal graph of an expert's policy through online execution and expert queries. \cite{lyle2021resolving} examine uncertainty-based exploration for disambiguating spurious correlations in both online and imitation learning settings. Although these methods have achieved success in previously studied robustness settings, they primarily rely on careful data collection in the environment. As we demonstrate in Section 6, techniques for achieving this often fail under contextual reliability.
Other work on robust imitation learning considers Bayesian robustness to uncertainty over the objective~\cite{brown2020bayesian,javed2021policy}, in contrast to  the contextual feature reliability we study.

 \textbf{Incorporating prior knowledge.} Recent works have proposed methods for leveraging prior human knowledge to improve neural network robustness. For example, \cite{kohconceptbottleneck} introduces the concept bottleneck method for embedding interpretable human concepts within neural networks. Although the concept bottleneck shares a similar high-level approach to our paper of imposing explicit structure in neural networks, concept bottleneck models do not consider the problem of controlling when different features are utilized, as we are concerned with. Notably concept bottleneck models represent an ideal setting for our method as the presence of human-interpretable concepts makes providing the annotations we consider in this work feasible. Another approach for incorporating prior human knowledge is explicitly regularizing model saliency maps to align with human annotations \cite{Drightforrightreason}. In our work, we demonstrate that human annotation of relevant features is particularly essential in the contextual reliability setting.  However, we avoid directly regularizing saliency maps in favor of data augmentations due to the fragility of saliency methods observed in prior work \cite{fragilegradients}.
\section{The Setting of Contextual Reliability}
\label{sec:contextualrobustness}

\newcommand\thetaopt{\ensuremath{\theta^\star_\text{rob}}}

We first formalize relevant background and introduce the setting of contextual reliability. Next, we contrast our setting with prior distribution shift settings.

\textbf{Preliminaries.} We learn predictors that map an input ${x} \in \mc{X}$ to some target $y \in \mathcal{Y}$ where $\mathcal{Y}$ is a discrete set. The target could either be a class label as in supervised learning, or the expert action in imitation learning. We assume access to $n$ sampled training points $\{({x}^{(1)}, \y^{(1)}), \ldots ({x}^{(n)}, \y^{(n)}) \}$. 
Let $\theta \in \Theta$ parameterize the class of predictors such that $f(x; \theta) \in \Real$ and $\ell:\Real \times \mathcal{Y} \mapsto \Real$ 
is used to compute the loss $\ell(f(x; \theta), \y)$ that evaluates the prediction 
at point ${x}$, for parameter $\theta$. 

\textbf{Reliable performance.} We are interested in models that work reliably, even under shifts between the train and test distributions. We consider two settings: supervised learning and imitation learning. 
In the supervised learning setting, we are interested in training robust models that work well across \emph{all} subpopulations in the training data. For example, these partitions could each model different settings like normal driving conditions, slowdowns due to accidents etc. Typically some subpopulations (e.g. normal driving conditions) are more common than others (e.g. accident-induced slowdowns). However, at test time, it is imperative to achieve good performance in \emph{all} subpopulations including the less frequent ones. 

Formally, training inputs $\mb{z}=(\x, \y)$ are drawn from a mixture distribution over the set of $K$ latent subpopulations, \ie $\mb{z} \sim \mathsf{P} \eqdef \sum_{k \in [K]} \alpha_k \mathsf{P}_k$, where $\mathsf{P}_k$ is the distribution over the $k^\text{th}$ subpopulation. The goal is to control the worst-case performance across all subpopulations:
\begin{align}
\label{eq:metric}
 \text{Err}_\text{rob}(\theta) \eqdef \max \limits_{k \in [K]} \E \limits_{(\x, \y) \sim \mathsf{P}_k} [\ell(f(\x; \theta), \y)], 
\end{align}
where $\ell$ is some appropriate loss function.

In the imitation learning setting, distribution shifts naturally arise due to a difference between the train distribution (induced by the expert policy) and test distribution (induced by the learned policy) which often leads to poor test performance of such methods \cite{de2019causal}. Our metric of interest $\text{Err}_\text{rob}(\theta)$ in these imitation learning settings is simply the total reward obtained by the policy induced by $\theta$.

\subsection{Background: Prior Robustness Methods}
\label{sec:background}
When training deep networks, it is widely observed that simply minimizing the empirical loss on the training data leads to poor performance under subpopulation shifts ~\cite{koh2021wilds,beery2021iwildcam,zech2018variable}. Several approaches have been proposed to achieve robust performance under these shifts. 

Consider the \emph{optimal robust predictor} defined as follows. 
\begin{align}
\thetaopt \eqdef \arg\min \limits_{\theta \in \Theta} \text{Err}_\text{rob} (\theta).
\end{align}
Prior approaches consider different training methods that are aimed at retrieving $\thetaopt$. 

\textbf{Invariance-based approaches.} One popular approach to improve robustness is to enforce invariances that are displayed by the optimal robust predictor $\thetaopt$. To do so, it is convenient to think of a model as using various ``features'' $\Phi$, where each $\phi: \mathcal{X} \mapsto \Real \in \Phi$ is non-spurious with respect to the optimal robust predictor $\thetaopt$ if $f(\x; \thetaopt)$ varies as the feature $\phi(\x)$ varies. All other features are considered spurious, \ie  $f(x; \thetaopt)$ is invariant to spurious features. Some approaches assume knowledge of the spurious features and directly enforce invariance during training via appropriate augmentations~\cite{botev2022regularising} or regularizing saliency maps~\cite{ross2017right}. Other works address the case where spurious features must be inferred automatically; however, these approaches offer limited gains in practice~\citep{arjovsky2019invariant, heinze2018invariant,peters2016causal}. 

\textbf{Robust optimization approaches.} Another family of robust training methods minimize the worst-case loss across subpopulations in the training data~\cite{sagawa2019distributionally}. These methods can be viewed as minimizing the empirical counterpart of the worst-case loss $\text{Err}_\text{rob}$ described in Equation~\eqref{eq:metric}. Crucially, these methods require annotating the entire training set with the subpopulation identity.

\textbf{Targeted data collection.}  A third family of approaches for learning reliable models seeks to influence the data collection process such that the empirical risk minimizer over this new distribution is close to $\thetaopt$. This usually involves collecting more data from subpopulations that are underrepresented in the original training distribution and requires the ability to either interact with the environment ~\cite{de2019causal,lyle2021resolving}, or actively query labels for points from an unlabeled pool~\cite{activelearningtamkin}. Importantly, the success of these methods depends heavily on access to reliable uncertainty estimates~(\eg \cite{activelearningtamkin, lyle2021resolving}) that inform the collection process.

\subsection{Our Setting: Contextual Reliability}
In this section, we introduce a novel setting that better captures the nuances of reliable performance in the real world. Next, we compare it to previously studied settings. 

We consider the following small twist to the data generation process. 
Given a discrete set of contexts $\mc{C} \eqdef \{c_1, c_2, \hdots c_k \}$, we first sample a context $\cont \sim \Prob(\cont)$ (from a categorical distribution over $\mc{C}$), and then sample $\x, \y$ from a distribution $\mathsf{P}_{\cont} \eqdef p((\x, \y) \mid \cont)$. Our goal is now to achieve reliable performance in \emph{all} contexts. Formally, we are interested in the following objective: 
\begin{align}
\label{eq:contextual metric}
 \text{ConErr}_\text{rob}(\theta) \eqdef \max \limits_{\substack{k \in [K], \\ c \in \mathcal{C}}} \E \limits_{(\x, \y) \sim \mathsf{P}_{k, c}} [\ell(f(\x; \theta), y)], 
\end{align}
where $\mathsf{P}_{k, c}$ is the probability distribution over the $k^\text{th}$ subpopulation in context $c$, and $\ell$ is an appropriate loss function.

As motivation, consider an autonomous driving setting with a \textit{next-lane vehicle speed} feature and two contexts indicating the presence/absence of a pedestrian crossing. Across both contexts, agent speed and \textit{next-lane vehicle speed} are generally correlated. However, in the context of no-pedestrian crossing, the slowing of a neighboring vehicle need not imply that the agent should slow, absent of other information. For example, the neighboring vehicle may be preparing to perform a turn or responding to an obstruction  in its lane. Suddenly slowing to mimic this vehicle may be unnecessary and expose the agent to the risk of rear-end collisions. At a pedestrian crossing, however, braking of the neighboring vehicle may indicate an unobservable pedestrian entering the intersection and is evidence in itself of the need to stop. Thus, the \textit{optimal robust predictor must rely on different features in different contexts.} Consequently, both existing context-invariant approaches and their context-incorporating extensions will fail to achieve reliable performance.

\subsection{The Need to Incorporate Context Information}
Without accounting for the context, we find all prior approaches can fail. We support this finding with intuition in this section, analytical proofs in Section~\ref{sec:analysis}, and experimental observations in Section~\ref{sec:experiments}.

Invariance-based methods train models that use the same set of features in all contexts. For our autonomous vehicle example, this would result in a model that either always uses the next lane speed (with dangerous outcomes when there is no pedestrian crossing and an irrelevant neighboring car slows down) or always ignores the next lane speed (with dangerous outcomes when there is a pedestrian crossing). In the extreme case where every feature is used by the optimal robust predictor in some context, invariance-based methods reduce to standard empirical minimization which is well documented to perform poorly under distribution shifts.

Robust optimization approaches on the other hand, make no assumptions of universal invariance with respect to features. However, they also fail if we do not incorporate context. Minimizing the empirical counterpart of the objective of interest in Equation~\eqref{eq:contextual metric} requires annotations of the context of training points. In the absence of context annotations, we can only minimize the empirical counterpart of Equation~\eqref{eq:metric} which can differ wildly from Equation~\eqref{eq:contextual metric} if the contexts are imbalanced in the training data. 
Finally, our empirical investigation in Section~\ref{sec:experiments} reveals that uncertainty-based data collection methods also fail to successfully handle multiple contexts. We hypothesize that this is due to the challenging nature of forming high-quality uncertainty estimates when confronted with latent contexts.

\section{How to Incorporate Context Information?}

\label{sec:method}
In the previous section, we introduced the setting of contextual reliability, where the optimal robust predictor relies on different features in different contexts, and argued that achieving reliable performance requires access to context information. 
In this section, we explore different ways of collecting and incorporating this information into model training. We start with a natural extension of prior approaches and describe its limitations. We then present our proposed approach and demonstrate it is a viable method for addressing the limitations faced by the baseline method.

\subsection{Context Identity Annotations}
In order to incorporate context knowledge, we can annotate every training point with its corresponding context. Formally, we annotate each training point $(\x^{(i)}, \y^{(i)})$ with $\cont^{(i)}$ such that $(\x^{(i)}, \y^{(i)}) \sim \mathsf{P}_{{\cont^{(i)}}}$.

\textbf{Independent Classifier Per-Context (ICC).} With this context identity information, one natural baseline is to simply train a separate model (via empirical risk minimization) for each context. At test time, we first predict the context of the input and then use the corresponding predictor. We refer to this method as ICC (independent classifier per-context).

\textbf{Context-Based Robust Optimization (conDRO).} A more sophisticated way to leverage context annotations is via robust optimization. Robust optimization approaches already assume annotations of the subpopulation identities of the training data, where different subpopulations capture partitions of the input space across which we want to obtain good worst-case performance. Equipped with additional context identities of training points, 
we can partition the training data into $m = |\mathcal{C}| \times K$ groups and minimize the worst-case training loss across all $m$ groups. Here, we have $K$ sub-populations for each context and $|\mathcal{C}|$ contexts. This would be the empirical counterpart of our objective of interest in Equation~\eqref{eq:contextual metric}. We refer to this method as \emph{conDRO}, an extension of robust optimization with context information. 
\begin{align}
\label{eq:conDRO}
\btheta_\text{conDRO} \eqdef \arg \min \limits_{\theta \in \Theta} \max \limits_{\substack{k \in [K], \\ c \in \mathcal{C}}} \E \limits_{(\x, \y) \sim {\hat{\mathsf{P}}}_{k, c}} [\ell(f(\x; \theta), \y)], 
\end{align}
where ${\hat{\mathsf{P}}}_{k, c}$ is the empirical distribution over all training points sampled from $\mathsf{P}_{k, c}$. 

In this work, we propose a new framework for extracting and incorporating information about contexts: Explicit Non-spurious feature Prediction (ENP). We propose to use a different kind of annotation rather than the natural but naive annotation of context identities.  
\subsection{Explicit Non-Spurious Feature Prediction}
Our framework is motivated by looking more carefully at what the optimal robust predictor should do in the contextual reliability setting. Recall that under contextual reliability, the optimal robust predictor relies on different features in different contexts. Therefore, the optimal robust predictor should first infer the context, and then leverage the contextually non-spurious features, while being invariant to the spurious ones. Rather than training a model end-end in some fashion and expecting this structure to emerge due to implicit biases in the training process, we propose to collect context information and explicitly insert this structure into the predictor. 
As the context affects the optimal predictor solely by determining the set of non-spurious features, we solicit context information in the form of explicit non-spurious feature annotations (formally defined below) instead of context identities.

\textbf{Feature Annotations.} Let $\theta^\star_\text{rob}(c)$ denote the optimal robust predictor for context $c$. A feature $\phi: \mathcal{X} \mapsto \Real$ in the set of countable features $\Phi$ is non-spurious in context $c$ if the distribution of $f(x; \theta^\star_\text{rob}(c))$ is not invariant to the feature values $\phi(x)$ for inputs $x \sim \mathsf{P}_{{c}}(\cdot)$. Let $\mc{N}(c)$ denote the set of all such non-spurious features $\phi(x)$ in context $c$. We propose to annotate training point $x^{(i)}, \y^{(i)}$ with the subset of non-spurious features $\mc{N}^{(i)} = \mc{N}(c^{(i)})$ where $x^{(i)},\y^{(i)} \sim \mathsf{P}_{c^{(i)}}$. We do not require the entire training set to be annotated, only (without loss of generality) the first $n' < n$ examples.
With these annotations, we propose a two-step methodology: 

\textbf{Step one: Train a feature predictor}. Given training data $\{(x^{(1)}, \mc{N}^{(1)}), \hdots (x^{(n')}, \mc{N}^{(n')})$, we learn a predictor $g: \mathcal{X} \mapsto 2^{\Phi}$ that maps inputs to their corresponding set of non-spurious features, where $\Phi$ is the set of all features. 

\textbf{Step two: Train a target model that relies exclusively on predicted non-spurious features}. 
We train a target model that takes as input the pair of original datapoint and its feature annotations $(x,\; \mc{N})$, and  returns the prediction $f(x; \theta)$ such that $f(x; \theta)$ is invariant to the spurious features, \ie all features $\phi \in \Phi \setminus \mc{N}$. 
This model is trained on training data comprising $(x^{(i)}, y^{(i)}, \mc{N}^{(i)})$ for $i=1, \hdots n'$ and $(x^{(i)}, y^{(i)}, g(x^{(i)}))$ for $i=n'+1, \hdots n$, where $g$ is the trained feature predictor obtained from step one. In other words, we use the ground-truth feature annotations when they are provided and the \textit{predicted} features annotations on unannotated data points. 

At test time, given an input $x$, we first apply the feature prediction model $g$ to obtain non-spurious features $g(x)$. We then pass $(x, g(x))$ as input to the target model and obtain final predictions. We enforce invariance at both test and training time via augmentations that perturb the values of spurious features (either by adding noise or zeroing them out) such that they cannot be relied upon by the target model. Our core methodological contribution is this two-step process where, rather than training an end-end model, we explicitly induce the structure that different features should be used in different contexts for reliable performance.  Next, we discuss the benefits of our proposed method (ENP) over alternatives.

\subsection{Benefits of Explicit Non-Spurious Prediction}
\label{sec:benefits}
We identify three axes along which ENP outperforms alternative approaches to incorporating context information. 

\textbf{(1) Annotation cost.} Our method only requires non-spurious feature annotations on a subset of the training set and can train a downstream model on the full training set by using \emph{predicted} feature annotations. In contrast, end-to-end approaches such as conDRO require context annotations to be provided on the entire training set. 
Across a variety of semi-synthetic and real datasets, we are able to achieve good non-spurious feature prediction accuracy with just a small fraction of the training set annotated. 

\textbf{(2) Annotation feasibility.}
Often, it is easier for experts to think in terms of the reliability of features for a given input, rather than identifying a dataset-wide partition of points into contexts. It is clear to an expert that next lane speed should be used when they see a pedestrian crossing. However, simply given an input with a pedestrian crossing, it might be hard to know apriori that this corresponds to a distinct context. In recent work, ~\cite{causalagents} use a similar rationale to crowdsource the annotations of ``causal agents'' in a driving scenario (discussed in Section \ref{sec:experiments}).

\textbf{(3) Preventing overfitting.} Even if we allow context annotations on all training points, we find that methods like conDRO fail. This is an issue with the inductive bias of current training algorithms. In the limit of infinite training data, conDRO (Equation~\eqref{eq:conDRO}) should also minimize our metric of interest (Equation~\eqref{eq:contextual metric}). However, conDRO fails to do so with finite data because it can overfit and fail to learn from minority subpopulations and contexts~\citep{sagawa2019distributionally, sagawa2020investigation}. Furthermore, training such a model end-to-end suffers from a chicken and egg problem as described in~\cite{liu2021decoupling}. In order to learn how to use non-spurious features, the model must have first learned to disambiguate the context. But without an explicit signal to disambiguate contexts, the only signal to disambiguate comes from different features being non-spurious across contexts. The model needs to already know how to use non-spurious features in order to access this signal. Our ENP framework breaks this chicken-egg problem by providing explicit supervision about the set of non-spurious features.
\section{Analysing ENP in a Simplified Setup}
\label{sec:analysis}

In a simplified setting that distills contextual reliability, we contrast ENP that uses non-spurious feature annotations with four baselines: (i)  IRM~\cite{arjovsky2019invariant} that learns a context invariant predictor; (ii) ICC where a separate predictor is learned for each context independently;  (iii) \mbox{conDRO}~\cite{sagawa2019distributionally} that optimizes for worst context performance (IRM, conDRO and ICC use context labels); and (iv) ERM which minimizes loss on labeled examples without knowledge of contexts or feature annotations. We show why each baseline performs suboptimally (compared to ENP), either due to its over-conservative nature in learning worst-case robust/invariant predictors or due to its statistical inefficiency caused by failure to share features across contexts. ENP's two stage procedure affords benefits specifically when the non-spurious feature predictor is easier to learn compared to learning the contextually reliable predictor end to end. 
Details on the data distribution and precise objectives for  algorithms are in Appendix~\ref{sec:algorithms-det} and proofs for our theoretical results are in Appendix~\ref{sec:theory-appendix}.

\textbf{Setup.} For a binary classification problem with  $\mc{Y} \eqdef \{-1,1\}$, the inputs $x = \brck{\x_1, \x_2, \x_3}$ (where, $\x_1, \x_2, \x_3 \in \Real^{d}$) span two contexts $\mc{C} \eqdef \{c_1, c_2\}$. The feature annotations for contexts $c_1, c_2$ are denoted by masks $\rm{C}_1, \rm{C}_2  \in \{0, 1\}^{3d}$ respectively. For each context, a different set of features is non-spurious: $\{\x_1, \x_2\}$ in $\cont_1$; and $\{\x_1\}$ in $\cont_2$. Thus, $\rm{C}_{1}^{(j)} = \mb{1}(j \leq 2d)$ and $\rm{C}_{2}^{(j)} = \mb{1}(j \leq d)$ where $\rm{C}^{(j)}$ is the $j^{th}$ coordinate for annotation $\rm{C}$. For more discussion on the annotations and other details on the data distribution please refer to Appendix~\ref{sec:algorithms-det}. In this setting, we theoretically analyze estimates returned by ERM, IRM, conDRO, ICC, and ENP for a class of linear predictors $\mc{W}_{1} \eqdef \{w \in \Real^{3d} : \|w\|_2 \leq 1\}$; and empirically evaluate solutions returned when optimizing them over deep nets.

\textbf{\ours has lower asymptotic errors than conDRO, IRM and ERM.}  
In Theorem~\ref{thm:pop-loss}, for linear models, we compare the asymptotic classification errors for all algorithms ($n\rightarrow \infty$). 
We see that both conDRO and IRM yield suboptimal performance (specifically on $c_1$) because: (i)
IRM is only restricted to use the invariant feature $x_1$, which is less predictive of the label  than $x_2$ in context $c_1$ when $\gamma \ll 1$; (ii) the conDRO objective enforces its solution to have high but uniform accuracies across both $c_1$ and $c_2$, and since any component along $x_2$ would affect the losses in both contexts in opposite ways, conDRO is forced to forego components along $x_2$. On the other hand, \ours improves over both since it is allowed to use different features in $c_1$ (both $x_1, x_2$) and $c_2$ (only $x_2$).
The ERM solution relies too heavily on $x_2$ since this significantly reduces the loss in the majority context $c_1$, but leads to worse than random performance on $c_2$. This is because correlations for $x_2$ are flipped between $c_1$ and $c_2$. While it may seem that ERM suffers because $\mc{W}_1$ class does not contain a predictor that is uniformly optimal on both contexts, in the subsection that follows we show that similar failures exist even when the model class is more expressive (deep nets). On the other hand, components along $x_2$ do not effect the predictions on $c_2$ for \ours since these components are effectively masked by the annotations $\rm{C}_2$. 
The solution for ENP is also comparable to Bayes optimal solutions found by ICC as is evident from corollary~\ref{corr-ours} which follows immediately from Theorem~\ref{thm:pop-loss}.

\begin{theorem}[test accuracies on population data]
\label{thm:pop-loss}
For $\rho_1$ $\eqdef$ $\nf{\|\mu\|_2}{\sqrt{2}\sigma}$, $\rho_2$ $\eqdef$ $\nf{\|\mu\|_2}{\sqrt{2}\eta}$, and $\gamma \ll 1$,  given population access, the following test accuracies are afforded by solutions for different 
optimization objectives over $\mc{W}_1$. 
For IRM, conDRO the accuracy $\forall p_c$ is
$0.5$$\cdot \erfc(-\rho_1)$ on both $c_1, c_2$;
ERM achieves 
$0.5 \cdot \erfc(-\rho_1 \sqrt{1+\nf{1}{\gamma}})$ on $c_1$ and $0.5 \cdot \erfc(-\nf{\rho_1(\gamma-1)}{\sqrt{\gamma^2+\nf{1}{\gamma}}})$ on $c_2$ as $p_c \rightarrow 1$;
ENP achieves $\forall p_c \geq 0.5$ at least 
$0.25 \cdot \erfc(-\rho_2) \cdot \erfc(-\rho_1\sqrt{1+\nf{1}{\gamma}})$ on $c_1$ and $0.25 \cdot \erfc(-\rho_2) \cdot \erfc(-\nf{\rho_1}{\sqrt{1+\nf{1}{\gamma^3}}})$ on $c_2$; 
and ICC achieves $\forall p_c$ 
$0.5 \cdot \erfc(-\rho_1\sqrt{1+\nf{1}{\gamma}})$ on $c_1$ and $0.5 \cdot \erfc(-\rho_1\sqrt{1+\gamma})$ on $c_2$. 
Here, $\erfc(x)$$=$$\nicefrac{2}{\sqrt{\pi}} \int_{x}^{\infty} e^{-t^2} \mathrm{d}t$. 
\end{theorem}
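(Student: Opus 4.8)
The plan is to exploit that every method in the theorem returns a \emph{linear} predictor $w \in \mathcal{W}_1$ and that, in the Gaussian generative model of Appendix~\ref{sec:algorithms-det}, each block of $x$ is conditionally Gaussian given the label $y$ and context $c$. Consequently $\langle w, x\rangle \mid (y,c)$ is a one-dimensional Gaussian; writing $m_{w,c} = \E[y\langle w,x\rangle \mid c]$ and $s_{w,c}$ for the standard deviation of $\langle w,x\rangle$ given $(y,c)$ (which is $y$-independent by symmetry), the accuracy of $\mathrm{sign}\langle w,x\rangle$ on context $c$ equals $P(y\langle w,x\rangle>0)=\tfrac12\erfc(-m_{w,c}/(\sqrt2\, s_{w,c}))$. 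Thus the whole theorem reduces to (i) identifying the solution direction produced by each objective and (ii) reading off the pair $(m_{w,c},s_{w,c})$ in each context. I will use that $x_1\mid y\sim\mathcal{N}(y\mu,\sigma^2 I)$ is reliable in both contexts, that $x_2$ has its correlation with $y$ flipped and rescaled between $c_1$ (predictive, small variance) and $c_2$ (anti-correlated, large variance), and that $x_3$ carries the context signal at noise level $\eta$.

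\textbf{The four baselines.} For IRM the invariance constraint forces the predictor onto the single feature whose law is context-independent, namely $x_1$, so its direction is $(\hat\mu,0,0)$ and in both contexts $m/s=\|\mu\|_2/\sigma$, giving $\tfrac12\erfc(-\rho_1)$. For ERM I would first argue that as $p_c\to1$ the population objective is dominated by $c_1$, so the minimizer converges to the $c_1$-Bayes (inverse-variance) direction $(\gamma\hat\mu,\hat\mu,0)$; substituting the $c_1$ and $c_2$ class-conditional moments into the tail formula produces $\tfrac12\erfc(-\rho_1\sqrt{1+1/\gamma})$ and $\tfrac12\erfc(-\rho_1(\gamma-1)/\sqrt{\gamma^2+1/\gamma})$, the latter below $1/2$ precisely because $x_2$ is anti-correlated in $c_2$. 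For ICC each per-context predictor is the Bayes classifier using all predictive features, with directions $(\gamma\hat\mu,\hat\mu,0)$ in $c_1$ and $(\hat\mu,-\gamma\hat\mu,0)$ in $c_2$; the sum-of-inverse-variances computation gives combined signal-to-noise ratios $\tfrac{\|\mu\|_2}{\sigma}\sqrt{1+1/\gamma}$ and $\tfrac{\|\mu\|_2}{\sigma}\sqrt{1+\gamma}$, i.e. $\tfrac12\erfc(-\rho_1\sqrt{1+1/\gamma})$ and $\tfrac12\erfc(-\rho_1\sqrt{1+\gamma})$. These are routine mean/variance bookkeeping once the directions are pinned down.

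\textbf{conDRO (first obstacle).} The crux for conDRO is to show that its worst-context objective forces zero weight on $x_2$. Restricting to $w=(\alpha\hat\mu,\beta\hat\mu,0)$, the signed means are $\|\mu\|_2(\alpha+\beta)$ on $c_1$ and $\|\mu\|_2(\alpha-\beta)$ on $c_2$, and I would show the maximin of the two context accuracies is attained exactly at $\beta=0$: any $\beta>0$ raises the $c_1$ margin but lowers the $c_2$ margin (and vice versa for $\beta<0$), so the worse context always degrades once $\beta$ moves off $0$. A first-order check at $\beta=0$ (the $c_1$ and $c_2$ accuracy derivatives in $\beta$ have equal magnitude and opposite sign) makes this the maximin point, collapsing conDRO to the $x_1$-only predictor with accuracy $\tfrac12\erfc(-\rho_1)$ in both contexts. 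The care needed is to verify this trade-off holds for the actual worst-case surrogate loss used in \eqref{eq:conDRO}, not merely for $0/1$ accuracy, and that the subpopulation split within each context does not perturb the symmetric solution (it does not, by the $y$-symmetry of the model).

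\textbf{ENP (second obstacle).} ENP's accuracy factorizes through its two stages. The feature predictor is a context classifier reading the $x_3$ signal, correct with probability $\tfrac12\erfc(-\rho_2)$, and this event is independent of the label given the context since $x_3$ is independent of $y$. Conditioned on a correct context prediction, the target model applies the $c_1$-Bayes direction $(\gamma\hat\mu,\hat\mu,0)$ to the augmented input, where masked features are replaced by noise of variance $\sigma^2/\gamma$. Evaluating the tail formula under this augmentation gives $\tfrac12\erfc(-\rho_1\sqrt{1+1/\gamma})$ on $c_1$ (no predictive feature masked) and $\tfrac12\erfc(-\rho_1/\sqrt{1+1/\gamma^3})$ on $c_2$, where the residual weight on the now-noised $x_2$ inflates the variance by exactly the factor yielding $\sqrt{1+1/\gamma^3}$. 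Lower-bounding total accuracy by the correct-context term alone (dropping the nonnegative wrong-context contribution) and invoking the independence above yields the stated products for all $p_c\ge0.5$. The delicate points are the augmentation-variance bookkeeping that produces the $\gamma^3$ term and justifying that the target direction stays pinned by the $c_1$ (majority) data across the stated range of $p_c$.
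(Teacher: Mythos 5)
Your proposal is correct and takes essentially the same route as the paper's proof: the same reduction of accuracy to the Gaussian tail formula $0.5\cdot\erfc\paren{-\nicefrac{m_{w,c}}{\sqrt{2}s_{w,c}}}$ (Lemma~\ref{lem:per-context}), the same Fisher-discriminant optimal directions $(\gamma\hat{\mu},\hat{\mu},\mathbf{0}_d)$ and $(\hat{\mu},-\gamma\hat{\mu},\mathbf{0}_d)$ (Lemma~\ref{lem:opt-linear}), the restriction of conDRO/ENP solutions to the span of $[\bmu,\mathbf{0}_d,\mathbf{0}_d]$ and $[\mathbf{0}_d,\bmu,\mathbf{0}_d]$ (Lemma~\ref{lem:subspace}), the equal-and-opposite margin trade-off at $\beta=0$ collapsing conDRO to the $x_1$-only predictor, and the two-stage product lower bound for ENP in which the noised $x_2$ contributes variance $\sigma^2/\gamma$ with no mean, yielding exactly the $\sqrt{1+\nicefrac{1}{\gamma^3}}$ factor. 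The only steps the paper makes explicit that you leave as sketches are the Moore--Osgood limit interchange justifying the ERM solution as $p_c\to 1$, and the appeal to classification calibration of the exponential loss to pass between surrogate minimizers and $0/1$ accuracies---a subtlety you correctly flag for conDRO.
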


\begin{figure*}
    \centering
    \includegraphics[width=0.85\textwidth]{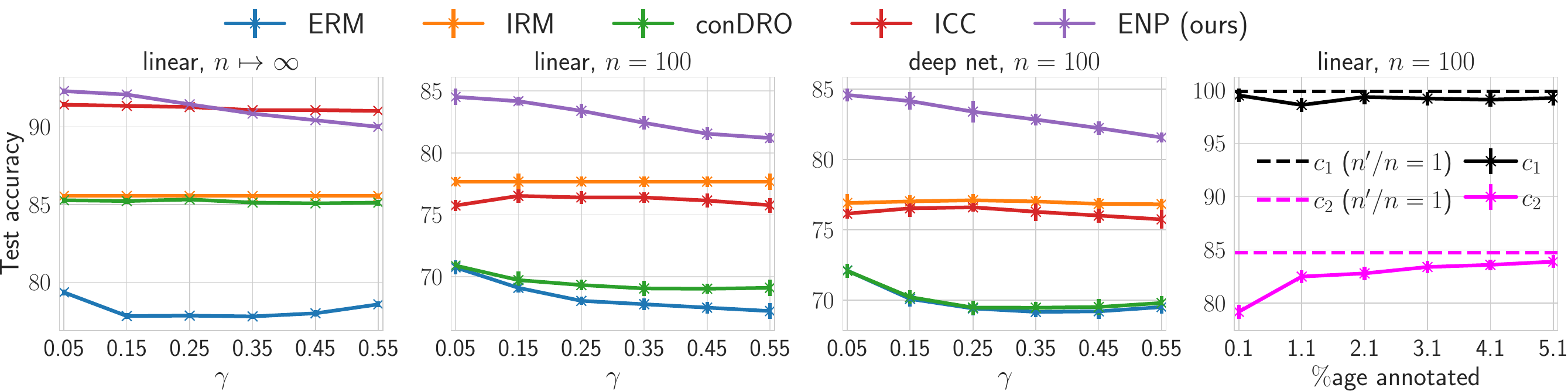}
    \caption{\textbf{Empirical evaluations of \ours and baselines (ERM, IRM, ICC, conDRO) on the simplified two-context setup in Section~\ref{sec:analysis}:} For each method, we plot the test accuracy averaged over both contexts (balanced average). In (a), (b) each method learns a  linear predictor (in $\mc{W}_1$), where the asymptotic results $(n$$\rightarrow$$\infty)$ are in (a), and the finite sample results $(n$$=$$100)$ are in (b). 
    Similarly, in (c) we compare finite sample results when the target predictor is a non-linear deep network.
    In all three plots we vary the problem parameter $\gamma$, where reducing $\gamma$ makes the feature $x_2$ more predictive of the target in $c_1$ and less predictive in $c_2$. 
    Finally, in (d) we plot the performance of \ours as we increase the fraction of non-spurious feature annotations in the training set. 
    For each value in the plot, we also report the standard deviation over five independent runs. Details on exact problem parameters for these runs are in Appendix~\ref{sec:algorithms-det}.}
    \label{fig:simulation-results}
\end{figure*}

\begin{corollary}[Almost Bayes optimality of ENP]
\label{corr-ours}
As the feature predictor in the first stage of \ours gets easier to learn $(\eta \rightarrow 0)$, the ratio of accuracies for \ours solution and Bayes optimal predictor approaches $1$ on $c_1$ and $\nf{\erfc(-\nf{\rho_1}{\sqrt{1+\nf{1}{\gamma^3}}})}{\erfc(-\rho_1)}$ on $c_2$. 

\end{corollary}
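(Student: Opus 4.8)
The plan is to derive Corollary~\ref{corr-ours} directly from the closed-form accuracies in \refthm{pop-loss} by taking the limit $\eta \to 0$ and forming the two ratios; no new estimation is required, since every quantity we need is already tabulated in the theorem. The corollary is asserted to follow ``immediately,'' so the work is isolating the single dependence on $\eta$ and identifying the correct denominators.

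First I would track the only place where $\eta$ enters the \ours accuracies, namely the prefactor $0.25 \cdot \erfc(-\rho_2)$ with $\rho_2 \eqdef \nf{\|\mu\|_2}{\sqrt{2}\eta}$. As $\eta \to 0$ we have $\rho_2 \to \infty$, so $\erfc(-\rho_2) \to 2$ (using $\erfc(x) = \nf{2}{\sqrt{\pi}} \int_x^\infty e^{-t^2}\,\mathrm{d}t$ together with $\int_{-\infty}^\infty e^{-t^2}\,\mathrm{d}t = \sqrt{\pi}$). Substituting into the \ours guarantees of \refthm{pop-loss}, the prefactor $0.25 \cdot \erfc(-\rho_2) \to 0.5$, so in the limit \ours attains accuracy $0.5 \cdot \erfc(-\rho_1\sqrt{1 + \nf{1}{\gamma}})$ on $c_1$ and $0.5 \cdot \erfc(-\nf{\rho_1}{\sqrt{1 + \nf{1}{\gamma^3}}})$ on $c_2$. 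Conceptually, $\nf{1}{2}\erfc(-\rho_2)$ is the success probability of the first-stage feature predictor, and $\eta \to 0$ drives it to $1$, so \ours's accuracy converges to the conditional accuracy of its target model given correct feature annotations.

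Next I would pin down the Bayes-optimal (contextually reliable) accuracies against which the ratios are taken. In $c_1$ both $x_1$ and $x_2$ are non-spurious, and the optimal predictor using both features achieves exactly $0.5 \cdot \erfc(-\rho_1\sqrt{1 + \nf{1}{\gamma}})$; this coincides with the ICC accuracy on $c_1$ in the theorem, confirming Bayes optimality there. In $c_2$ only $x_1$ is non-spurious, so the reliable Bayes-optimal predictor may use $x_1$ alone, yielding $0.5 \cdot \erfc(-\rho_1)$ (the same single-feature accuracy attained by IRM/conDRO). Dividing the limiting \ours accuracies by these gives ratio $1$ on $c_1$ and $\nf{\erfc(-\nf{\rho_1}{\sqrt{1 + \nf{1}{\gamma^3}}})}{\erfc(-\rho_1)}$ on $c_2$, as claimed.

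The only genuinely substantive point---and the one I would flag as the main obstacle---is justifying the $c_2$ denominator and the residual gap there. One must recognize that ``Bayes optimal'' in this corollary means the optimal \emph{reliable} predictor respecting the non-spurious feature set of each context, so its $c_2$ accuracy is the $x_1$-only value $0.5 \cdot \erfc(-\rho_1)$ rather than ICC's slightly larger $c_2$ value $0.5 \cdot \erfc(-\rho_1\sqrt{1+\gamma})$. The departure from $1$ on $c_2$ is then explained by \ours sharing a single target model across contexts: even with perfect feature prediction, masking $x_2$ in $c_2$ leaves an $x_1$-weight tuned jointly with $c_1$, producing the $\sqrt{1 + \nf{1}{\gamma^3}}$ shrinkage of the effective signal-to-noise ratio and hence the stated sub-unit ratio. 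Everything else is immediate substitution into \refthm{pop-loss}.
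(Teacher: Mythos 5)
Your proposal is correct and takes essentially the same route as the paper's proof: both take $\eta \to 0$ in \refthm{pop-loss} so that $\erfc(-\rho_2) \to 2$, read off the limiting \ours accuracies $0.5\cdot\erfc(-\rho_1\sqrt{1+\nf{1}{\gamma}})$ on $c_1$ and $0.5\cdot\erfc(-\nf{\rho_1}{\sqrt{1+\nf{1}{\gamma^3}}})$ on $c_2$, and divide by the per-context optima. Your explicit identification of the $c_2$ denominator as the $x_1$-only reliable optimum $0.5\cdot\erfc(-\rho_1)$ (rather than ICC's slightly larger $0.5\cdot\erfc(-\rho_1\sqrt{1+\gamma})$, which coincides with it only in the $\gamma \ll 1$ regime) makes explicit a choice the paper's one-line proof leaves implicit, and is the correct reading of the stated ratio.
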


\textbf{\ours is statistically more efficient than ICC when contexts share some features.} The distribution of $\x_1$ is identical in both $c_1$ and $c_2$. But, recall that the ICC method (which also uses context annotations similar to conDRO) only relies on samples from $c_2$ to learn a classifier for $c_2$ (independent of any information from $c_1$). This can be statistically suboptimal when the distribution over contexts is skewed and consequently there are much fewer training samples drawn from $c_2$ (\ie when $p_c \rightarrow 1$). On the other hand, due to explicit annotations, ENP is aware that $x_1$ is non-spurious in both contexts. In Appendix~\ref{app:gen-error}, we formally show that the generalization error upper bound for ICC is worse than \ours by a factor of $\mc{O}({1/\sqrt{1-p_c}})$ on context $c_2$.

\subsection{Empirical Results in our Simplified Contextual Reliability Setting} 
\label{subsec:toy-empirical}

The empirical results in Figure~\ref{fig:simulation-results}(a) show asymptotic errors for the various methods and the results agree with our theoretical findings in Theorem~\ref{thm:pop-loss}. First, we see that both IRM and conDRO only learn $x_1$ and  have similar (suboptimal) asymptotic errors on $c_1$, compared to other methods (\eg ENP) that also learn feature $x_2$. 
Second, ERM has poor test accuracy since it relies too heavily on the more predictive feature $x_2$ in the majority context $c_1$, and since $x_2$ is anti-correlated on minority, ERM's accuracy on $c_2$ drops below that of random baseline. Additionally, as $\gamma$ increases we see a slight drop in the performance of \ours since the signal to noise ratio for $\x_2$ decreases in $c_1$ (where it is used by \ours) and increases in $c_2$ (where it is ignored). 
Next, in the finite sample setting, we see the poor performance of ICC in Figure~\ref{fig:simulation-results}(b), since it fails to leverage the shared feature $\x_1$ when trying to learn independent classifiers per-context. The baseline conDRO also overfits on the minority examples from $c_2$, by memorizing high dimensional noise along $\x_2, \x_3$~\cite{sagawa2020investigation}.
On the other hand, compared to ICC and conDRO, for any value of $\gamma$, using only finite data \ours achieves performance closer to that of its asymptotic value in Figure~\ref{fig:simulation-results}(a), indicating that it suffers less from finite sample estimation errors (see Theorem~\ref{thm:est-error} in Appendix~\ref{app:gen-error}). \ours improves over ICC since it can efficiently learn the feature $\x_1$ by using samples from both contexts, and it improves over conDRO since the feature annotation $\mb{C}_2$ masks out any noise along $\x_2, \x_3$ in context $c_2$, preventing memorization explicitly. 
In Figure~\ref{fig:simulation-results}(c), we plot the performance of methods when the model class is one hidden layer deep networks with $512$ ReLU activations. Here, even though the model class is expressive enough to contain predictors that are uniformly optimal over both contexts, IRM continues to fail as it enforces invariance whereas conDRO, ERM and ICC suffer more from statistical inefficiencies.
Finally, in Figure~\ref{fig:simulation-results}(d) we plot the test accuracy of \ours as the fraction of samples with feature annotations is increased. This improves the performance of the learned feature predictor which in turn improves the test performance for the target predictor, corroborating our results in Corollary~\ref{corr-ours}. 

\section{Experiments}
\label{sec:experiments}

In this section, we study contextual reliability in three settings spanning supervised learning for classification (Corrupted Waterbirds), imitation learning for policies (Noisy Mountain Car), and real-world vehicle trajectory prediction (Waymo Open Motion Dataset). We compare appropriate baselines in each setting to ENP and demonstrate that the theoretical benefits of ENP (Section~\ref{sec:analysis}) transfer to practice. Further experimental details can be found in Appendices~\ref{app:expdetails} and~\ref{app:womd} (for the WOMD).

\subsection{Setting One: Corrupted Waterbirds}
\textbf{Setting.} We adapt the standard Waterbirds robustness benchmark demonstrated in \cite{groupdro} to generate a data-set where the foreground bird images are blurred and randomly cropped with probability 0.05. In this setting, simply relying on the foreground bird images (as done in prior works) is suboptimal: when the foreground is corrupted, the highly correlated background provides useful information. On the other hand, when the foreground bird is unambiguous, we want to avoid relying on the background as it is not always predictive (for e.g. water birds in land background). 

\textbf{Methods.} We test the following methods on Corrupted Waterbirds: ERM, Group DRO (where groups are defined using the spurious/core features \cite{groupdro}), conDRO (group definitions are augmented with the ground-truth context), and ENP. We also compare to an oracle version of ENP where we augment according to the ground-truth (rather than predicted) context labels (GT-Aug.). We omit baselines such as Just Train Twice \cite{jtt} and Learning from Failure \cite{learnfromfailure} as they strictly underperform Group DRO. We report the worst group accuracy across contexts (corrupted and clear foreground) in Table~\ref{table:corruptwaterbirds}. 

\textbf{Results. } First, we note ERM has poor worst-group accuracy ($67\%$). Standard group DRO (which is state-of-the-art on WaterBirds) actually harms robustness in our setting with a worst-group accuracy of $60\%$. Thus, we cannot ignore the context structure for improving contextual reliability. Next, we test ENP and compare it to two methods that assume ground-truth context information on all training points: GT-Aug. and conDRO. Even with far less context information ($10\%$), ENP performs comparably to both methods ($72.8\%$). In Appendix \ref{app:ablations}, we test the performance of our feature predictor with varying feature annotation budgets.
\begin{table}
\caption{\textbf{Corrupted Waterbirds classification accuracies.} We provide the worst-group accuracy for our Corrupted Waterbirds setting, where groups are defined in terms of both the spurious attribute and context. We test methods that don't make use of context: ERM and GroupDRO (groups assigned without context information) as well as methods that use varying amounts of context information: conDRO and GT-Aug. require context information on all training points, while ENP requires only annotations on $10\%$ of the training points.  }
\vspace*{-\baselineskip} 
\label{table:corruptwaterbirds}

\begin{center}
\begin{small}
\begin{sc}
\begin{tabular}{lcccr}
\toprule
Method & Worst Case Acc.\\
\midrule
ERM &  0.67 \\
GDRO & 0.60 \\
\textbf{conDRO} & \textbf{0.73}\\
\textbf{GT-Aug.} & \textbf{0.73} \\
\textbf{ENP} & \textbf{0.728} \\
\bottomrule
\end{tabular}
\end{sc}
\end{small}
\end{center}
\vskip -0.2in
\end{table}
\subsection{Setting Two: Noisy MountainCar}

\begin{table}[t]
\caption{\textbf{Noisy MountainCar policy returns.} We show the policy evaluation returns of various imitation learning methods. We consider two standard imitation learning methods with either access to or no access to the previous action (respectively With(out) Prev. Action. We test two baselines that are successful in the universally spurious feature setting (Policy Exec Intervention and Targeted Exploration). Finally, we test conDRO (with ground-truth access to context on all points) and ENP (with ground truth access to feature annotations on 10\% of the training data).}
\label{table:mountaincarmethod}
\vspace*{-\baselineskip} 
\begin{center}
\begin{small}
\begin{sc}
\begin{tabular}{lcccr}
\toprule
Method & Test Reward \\
\midrule
Without Prev. Action & -170.4 $\pm$ 9.7\\
With Prev. Action & -194 $\pm$ 4.6\\
Policy Exec Intervention & -188.3 $\pm$ 7.2\\
Targeted Exploration & -195.2 $\pm$ 4.2\\
conDRO & -188.3 $\pm$ 4.26\\
\textbf{ENP} & \textbf{-139.5$\pm$ 13.6}\\

\bottomrule
\end{tabular}
\end{sc}
\end{small}
\end{center}
\vskip -0.2in
\end{table}

\textbf{Setting.} We study contextual reliability in imitation learning by extending the setting studied in \cite{de2019causal} where adding the previous action to the state causes the policy to underperform due to spurious correlations. In the original setting, it is optimal to always ignore the previous action. However, we hypothesize that when the state is noisy, historical actions can be useful to disambiguate it. To test this, we construct a modified version of the MountainCar environment where noise is added to the velocity in a subset of the state space. Since the optimal MountainCar policy must take different actions at a given x-position depending on which direction it is heading (up or down the slope), making selective use of the previous action is necessary to recover the heading information lost by the state noise. 

\textbf{Results for baselines.} We first test two approaches based on standard imitation learning: Without Prev. Action assumes no access to the previous action and With Prev. Action allows access to it. As shown in Table \ref{table:mountaincarmethod}, both methods perform poorly, demonstrating the insufficiency of universal invariance to the previous action. Next, we consider learning a causal graph of the optimal action through policy execution \cite{de2019causal} (Policy Exec Intervention). We find that the learned causal graph often does not contain the previous action despite it being useful on the noisy examples in our setting, resulting in poor policy performance. 

Prior work has also considered training an exploration policy to directly visit high uncertainty states and demonstrated the capabilities of this approach when there exists a universal set of reliable features. We test this method (Targeted Exploration) 
 (implementation details in Appendix \ref{app:expdetails}) and our results in Table 2 demonstrate the insufficiency of this approach and exemplify the challenges of uncertainty based active-learning in the contextual reliability setting.

 Finally we test conDRO in the imitation learning setting, defining groups using the ground-truth context, current action, and previous action. We find that it performs poorly ($-188.3$) and hypothesize that this arises from a mismatch between the conDRO objective and the evaluation metric of policy execution reward.

\textbf{Results for ENP.} We test the ENP framework in the Noisy MountainCar setting. We train a non-spurious feature prediction model by subsampling $10\%$ of the training data and providing feature annotations on these points. We use this predictor model to label all training and test states and enforce invariance through augmentations that randomly perturb the previous action (when spurious) by selecting uniformly from all possible actions. ENP's performance (-139.5) outperforms all baselines (best -170.4), showing our method is successful in appropriately making use of the previous action, while not over-relying on it. 
\subsection{Setting Three: Waymo Open Motion Dataset}
\begin{table}[t]
{\small
\caption{\textbf{Validation minADE on Waymo Open Motion Dataset.} We present the minimum average displacement error (minADE) of various MultiPath++ models, evaluated on a validation set with all agents labeled spurious by humans removed. We compare with two methods that do not use any annotation information: Standard training and Random Augmentations (where agents are randomly deleted during training). Given access to spuriousness annotations, we test a method where only human-annotated samples are augmented (Annotated Augmentations) and ENP.}
\label{table:waymo}
\vspace*{-\baselineskip} 
\begin{center}
\begin{small}
\begin{sc}
\begin{tabular}{lcccr}
\toprule
Method & Perturbed minADE  \\
\midrule
Standard & 0.817 \\
Random Augmentations& 0.815 \\
Annotated Augmentations & 0.801 \\ 
\textbf{ENP} & \textbf{0.774}\\
\bottomrule
\end{tabular}
\end{sc}
\end{small}
\end{center}
\vskip -0.1in}
\vspace{-1em}
\end{table}

\textbf{Setting.} As a preliminary evaluation of ENP on real-world data, we perform experiments using a subset of the Waymo Open Motion Dataset (WOMD) \cite{ettinger2021large} on the task of predicting the future trajectory of an autonomous vehicle. \cite{causalagents} demonstrated that many state-of-the-art models base their predictions on agents that human drivers ignore as spurious and released crowd-sourced agent spuriousness labels (termed causal agent labels) on a subset of the data. For simplicity, we treat road agents as features in this setting. We test all methods on a held-out subset of the annotated data and perturb these samples by deleting all spurious labeled agents. On a large and complex dataset such as WOMD, it can be particularly challenging to a priori specify an appropriate set of contexts, whereas pointwise annotation of spuriousness can be performed easily by human drivers. This makes it impossible to test \textit{conDRO} on WOMD and provides evidence of the enhanced annotation feasibility enjoyed by ENP (see Section \ref{sec:benefits}).

\textbf{Baselines.}
We compare ENP to two methods that do not make use of feature annotations: a standard-trained MultiPath++ model (Standard) and data augmentations which randomly delete agents during training (Random Augmentations). In addition, we examine a method that incorporates $20\%$ of the ground-truth annotated data into the training set and generates data augmentations that delete spurious-labeled agents (Annotated Augmentations). We see that Standard and Random Augmentations perform comparably ($0.817$ and $0.815$), while Annotated Augmentations result in improved performance ($0.801$). This further demonstrates the importance of incorporating spuriousness annotations for achieving reliable performance.

\textbf{ENP: Training a feature predictor.}
For the first step of ENP, we train a feature predictor by sampling $20\%$ of the annotated samples and training a model to predict the spuriousness of a given agent on this data. We find that we are able to use a much smaller architecture, relative to the full MultiPath++ model and achieve \textbf{$84.9\%$} performance on the spuriousness prediction task. The relatively small dataset size and simple model used in our method provide evidence of our hypothesis that learning the rules governing the spuriousness of agents is much easier than achieving good performance on the target task (trajectory prediction).

\textbf{ENP: Training the target model.}
Using our trained feature predictor model, we predict feature spuriousness labels on all trajectories in our trajectory prediction training set. We implement ENP analogously to Annotation Augmentations, except we are able to generate augmentations on \emph{all} data points using our feature prediction model. ENP achieves a significant improvement ($0.774$) over Annotated Augmentations ($0.801$), providing evidence of ENP's efficacy in improving model reliability with limited access to spuriousness annotations.  
\section{Conclusion}
We introduce and study a new setting of contextual reliability where it is optimal to rely on different features in different contexts. This captures several realistic settings and introduces new challenges to robust machine learning. Our theory and experiments show that methods that do not incorporate context information struggle to improve contextual reliability. Incorporating context boils down to eliciting information from an expert about the latent context. We propose and advocate for a framework that uses explicit annotations of the non-spurious features for a small fraction of the training data. 
The success of our method relies on two ingredients. The first is the ability to effectively annotate non-spurious features. 
As representation learning methods improve via large-scale pretraining, it is an interesting future direction to consider annotations in terms of higher-level learned features. 
The second ingredient is the ability to successfully learn a high-quality predictor that maps inputs to non-spurious features. We provide evidence here that this is indeed already possible for the real-world setting of motion prediction in driving. We believe an exciting line of future work is to consider even more complex context-prediction scenarios perhaps by allowing for test-time interventions with an expert.

\bibliography{main}
\bibliographystyle{unsrt}

\newpage
\appendix
\onecolumn

\section*{Appendix Outline}

\section{Additional details for the setup in Section~\ref{sec:analysis}.}

In this section we provide details on the setup used for our analysis in section~\ref{sec:analysis}. We begin by describing the data distribution and non-spurious feature annotations for each context. Then, we provide details on the various objectives we theoretically and empirically analyze in this setup.

\label{sec:algorithms-det}

\textbf{Data distribution.} The data distribution $\mathsf{P}$ (over $\mc{X} \times \mc{Y}$) for our setup is described in \eqref{eq:data-dist} where $p_c$ is the probability of choosing the context $c_1$ (majority context when $p_c \gg 0.5$), $\eta$ is the signal to noise ratio that controls the hardness of learning the non-spurious feature predictor, and $\gamma$ $\ll$ $1$ controls the signal to noise ratio (hardness of learning) for feature $\x_2$ over $\x_1$ in $c_1$, and $\x_1$ over $\x_2$ in $c_2$. Note, that this setup distills contextual reliability in the sense that the feature $x_2$ is much more useful in predicting the label in context $c_1$ (over $c_2$), and the invariant feature $x_1$ is predictive of the label to the same degree in both contexts. 
\begin{align}
    \footnotesize
    \textrm{For}\; \bmu \in \Real^d, \; \textrm{context}\; \cont = c_1 & \; \textrm{with prob.}  \; p_c,\;\textrm{and}\; \y \sim  \mathrm{Unif}(\cbrck{-1, 1}), \nn \\
       \x_1 \mid y \;  &\sim \; \mc{N}(\bmu \y,\; \sigma^2  \I) \nn \\
     \x_2 \mid (\y, \; \cont=c_1) \;  &\sim \; \mc{N}(\bmu \y,\; \gamma \cdot  \sigma^2  \I) \nn \\
     \x_2 \mid (\y, \; \cont=c_2) \;  &\sim \; \mc{N}(-\bmu \y,\; \nf{1}{\gamma} \cdot  \sigma^2  \I) \nn \\
     \x_3 \mid \cont=c_1 \; &\sim \; \mc{N}(\bmu,\; \eta^2  \I) \nn \\
    \x_3 \mid \cont=c_2 \; &\sim \; \mc{N}(-\bmu,\; \eta^2  \I) \label{eq:data-dist}
\end{align}

\textbf{Models.} For the theoretical analysis we restrict ourselves to a linear model class. In Section~\ref{sec:analysis} we also have experimental results with deep networks.  
We use $\mc{W}_1$ to denote the class of linear predictors that are bounded in $l_2$ norm: $\mc{W}_1 \eqdef \{x \mapsto w^\top x : w \in \Real^{3d}, \; \|w\|_2 \leq 1. \}$. 
A label classifier that is used to predict the task label is evaluated using the loss function $\ell:\Real \times \mc{Y} \mapsto \Real$ evaluates classifiers $w \in \mc{W}_1$, and is a surrogate loss for the $\lzone$ error where $\lzone(w^\top x , \y) = \mathbf{1}(\sgn(w^\top x) = \y)$. 
On the other hand, a classifier that is used to predict  the context of a particular input is evaluated using the loss function $\ell':\Real \times \mc{C} \rightarrow \Real$. For the theory, both the label classifier and the non-spurious feature predictor are restricted to the linear class $\mc{W}_1$. For experiments with deep nets, we set the deep network to be a one-hidden layer ReLU network with 512 activations.

\textbf{Labels and annotations.} 
From the distribution $\mathsf{P}$ above, we are given an \textit{i.i.d.} sampled dataset $\hat{\mathsf{P}} \eqdef \{(x^{(i)}, \y^{(i)})\}_{i=1}^{n} \sim \mathsf{P}^n$. 
When clear from context, we will also use $\hat{\mathsf{P}}$ to denote to denote the empirical distribution over the sampled data.  

The context conditional distribution $(\x, \y) \mid \cont$ is denoted by ${\mathsf{P}_\cont}$. For \ours (our method) we assume access to an \textit{iid} subset of $n' \leq n$ samples for which we have the feature annotations. We have two annotations $\rm{C}_1$ and $\rm{C}_2$ for $c_1$ and $c_2$ respectively. The annotation is as follows. The $j^{th}$ co-ordinate of the feature annotation $\rm{C}^{(j)} = \one(j \leq 2d)$ if $\cont=c_1$,  and $\rm{C}^{(j)} = \one(j \leq d)$ if $\cont=c_2$. This is because, when $\gamma \ll 1$ is small, both $\x_1$ and $\x_2$ are predictive of the label in $c_1$, whereas only $\x_1$ is predictive in $c_2$ since the signal-to-noise ratio is poor for $\x_2$ in $c_2$. For the conDRO baseline, we assume access to context labels $\cont$ (but not feature annotations). For label classification, we use the exponential loss: $\ell(z, \y) = \exp(-z\cdot \y)$ where $\y \in \{-1, 1\}$ and $z \in \Real$. For context classification, we also use an exponential loss but one that now treats context $c_1$ as label $+1$ and context $c_2$ as label $-1$, \ie $\ell'(z, \cont) = \exp\paren{-z\cdot \mathbf{1}(\cont =  c_1) + z \cdot \mathbf{1}(\cont =  c_2)}$.

\textbf{Algorithms.} The goal of our analysis is to compare the performance of conDRO, ERM, and IRM with \ours by analyzing the asymptotic error for the solution found by each method, and also its statistical efficiency. Here, we write the objectives for linear predictors. For non-linear functions, the map $x \mapsto w^\top x$ is replaced with a deep neural network: $f: \mc{X}\mapsto \Real$.  

First, we begin with the ERM and IRM objectives that uses no other auxiliary information apart from the label for each example. The former minimizes average loss using all the features in the input, while the latter does the same only using the invariant feature (across contexts), which is $x_1$.  
\begin{align}
  \textrm{ERM:}&  \;\; \min_{{w} \in \mc{W}_1} \; \E_{\mathsf{P}} \ell(w^\top x,\; \y) \label{eq:erm-toy}\\
  \textrm{IRM:}& \;\; \min_{{w} \in \mc{W}_{1,\mathrm{irm}}}  \E_{\mathsf{P}} \ell(w^\top x,\; \y)  \label{eq:irm-toy},
\end{align}
where $\mc{W}_{1,\mathrm{irm}}$ is the class of norm bounded linear predictors that only use feature $x_1$ \ie, $\mc{W}_{1,\mathrm{irm}} \eqdef \{ w \in \mathcal{W}_1 : w = [w', \mathbf{0}_d, \mathbf{0}_d]\}$ ($w' \in \Real^d$ and $\mathbf{0}_d$ is a $d$-dimensional vector of $0$s). 

Next, we consider objectives that use context information (in addition to labels): (i) conDRO:  optimizes for the worst performance across contexts; (ii) ICC: learns a different classifier for each context using only samples drawn from that context. Note the ICC learns Bayes optimal predictors for each context and thus has the lowest asymptotic errors. 
\begin{align}
  \textrm{conDRO:}&   \;\; \min_{{w} \in \mc{W}_1} \max_{\cont \in \mc{C}} \; \E_{\mathsf{P}_\cont}  \ell(w^\top x,\; \y) \label{eq:conDRO-toy} \\
  \textrm{ICC:}& \;\; \min_{\begin{subarray}{r}w_1, w_2 \\ \in \mc{W}_1 \end{subarray}}  \E_{\mathsf{P}_{c_1}} \ell(w_1^\top x,\; \y) + \E_{\mathsf{P}_{c_2}} \ell(w_2^\top x,\; \y) \label{eq:icc-toy} 
  \end{align}

Finally, we describe ENP that learns two predictors: (i) non-spurious feature predictor that predicts the context (and consequently the corresponding annotation) for each test example; (ii) the label predictor  
\begin{align}
  & \textrm{\ours (target predictor)}: \;\; \min_{w \in \mc{W}_1} p_c \cdot \E_{\mathsf{P}} \ell\paren{w^\top \paren{ \rm{C}_1 \circ {\x}}},\; \y) + (1-p_c) \cdot \E_{\mathsf{P}} \ell\paren{w^\top \paren{ \rm{C}_2 \circ {\x}}},\; \y) \label{eq:ours}, \\ 
  & \textrm{\ours (feature predictor)}: \;\;  \min_{g \in \mc{W}_1} \; \E_\mathsf{P} \ell'(g^\top x, \cont), 
\end{align}
where, $\circ$ represents Hadamard product.

Note, that for conDRO we use context annotations to optimize for the worst context performance, and since this is clearly more optimal for contextual reliability (over traditional group DRO methods that do not use context information), this is the only DRO baseline we analyze. Subsequently, we shall see why even this strategy can be inefficient at learning the optimal robust predictor for each context. In the IRM objective, we restrict optimization over linear predictors that make predictions solely using $\x_1$, the only feature whose class distribution is invariant across both contexts (for each label).

\section{Omitted proofs and formal statements for the analysis in Section~\ref{sec:analysis}}
\label{sec:theory-appendix}

In this section, we provide proofs for our theorem statements in Section~\ref{sec:analysis} of the main paper. We also provide formal discussion on the generalization results for ENP and ICC.

\subsection{Proof for Theorem~\ref{thm:pop-loss}}

In this subsection, we prove claims regarding the asymptotic errors attained by solving population versions of the objectives in Section~\ref{sec:algorithms-det}, when the model class is linear $(\mathcal{W}_1)$. We look at each objective separately, but before that we introduce the following two  lemmas on optimal linear target predictors for each context, and accuracies on each context.

\begin{lemma}[optimal linear predictors for $c_1, c_2$]
The linear predictor in $\mc{W}_1$ with the least $\lzone$ error for context $c_1$ is $\nicefrac{1}{(\|\mu\|_2\sqrt{1+\gamma^2})} \cdot \brck{\mu\gamma, \mu, \mathbf{0}_d}$, and for context $c_2$ is $\nicefrac{1}{(\|\mu\|_2\sqrt{1+\gamma^2})} \cdot \brck{\mu, -\mu \gamma, \mathbf{0}_d}$. Here $\lzone$ is the 0-1 loss: $\lzone(z, \y) = \one(\sgn(z) = \y)$.    
\label{lem:opt-linear}
\end{lemma}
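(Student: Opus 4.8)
The plan is to reduce, for each fixed context, the optimization to a one–dimensional Gaussian discrimination problem and then maximize the resulting signal-to-noise ratio over the weight vector. Fix a context and write a candidate predictor as $w = [a, b, c]$ with $a,b,c \in \Real^{d}$, so that the score is $w^\top x = a^\top x_1 + b^\top x_2 + c^\top x_3$. Conditioned on the label $y$ (and the context), each block of $x$ is Gaussian with a covariance that does not depend on $y$, and with a mean that is $y$-odd in the $x_1,x_2$ blocks but $y$-\emph{independent} in the $x_3$ block. Hence the score is $\mathcal{N}(y\,s + t,\; v)$, where for $c_1$ one has $s = (a+b)^\top\mu$ and $v = \sigma^2\|a\|_2^2 + \gamma\sigma^2\|b\|_2^2 + \eta^2\|c\|_2^2$, for $c_2$ one has $s = (a-b)^\top\mu$ and $v = \sigma^2\|a\|_2^2 + \gamma^{-1}\sigma^2\|b\|_2^2 + \eta^2\|c\|_2^2$, and in both cases $t = \pm c^\top\mu$ is a label-independent shift. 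The first step is to record these mean/variance expressions explicitly.

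Since the predictor has no intercept, the decision rule is $\sgn(w^\top x)$ with threshold $0$, and with equal priors the $\lzone$ error is $\tfrac12\,\Pr[\mathcal N(s+t,v)<0] + \tfrac12\,\Pr[\mathcal N(-s+t,v)>0]$, a function of $s,t,v$ only. I would first argue that the optimum has $c = \mathbf 0_d$: viewing the error as a function of the shift $t$ for fixed $s,v$, its derivative vanishes at $t=0$ and its second derivative is positive there (the two Gaussian tail terms are balanced only at the centered threshold), so $t=0$ is optimal, forcing $c^\top\mu = 0$; and for fixed signal $s$ the error is increasing in $v$, so any residual $\|c\|_2>0$ only inflates the noise. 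Together these give $c=\mathbf 0_d$.

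With $c=\mathbf 0_d$, the error is a strictly decreasing function of the scale-invariant ratio $s/\sqrt v$. For fixed norms $\|a\|_2,\|b\|_2$, the signal $s$ is maximized when $a,b$ are parallel to $\pm\mu$ (orthogonal components contribute nothing to $s$ but inflate $v$), so it suffices to take $a=\alpha\mu$, $b=\beta\mu$. The ratio then reduces to $\tfrac{\|\mu\|_2}{\sigma}\cdot\tfrac{\alpha+\beta}{\sqrt{\alpha^2+\gamma\beta^2}}$ for $c_1$ and $\tfrac{\|\mu\|_2}{\sigma}\cdot\tfrac{\alpha-\beta}{\sqrt{\alpha^2+\gamma^{-1}\beta^2}}$ for $c_2$. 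Each is maximized by Cauchy–Schwarz: writing $\alpha+\beta = \langle(\alpha,\sqrt\gamma\,\beta),(1,1/\sqrt\gamma)\rangle$ gives equality iff $\alpha=\gamma\beta$, so $(\alpha,\beta)\propto(\gamma,1)$ and the optimal direction is $[\gamma\mu,\mu,\mathbf 0_d]$; the analogous computation with $\beta' = -\beta$ for $c_2$ gives $\beta = -\gamma\alpha$, i.e. $(\alpha,\beta)\propto(1,-\gamma)$ and direction $[\mu,-\gamma\mu,\mathbf 0_d]$. Normalizing each to unit $\ell_2$ norm produces exactly the vectors claimed, since $\|[\gamma\mu,\mu,\mathbf 0_d]\|_2 = \|\mu\|_2\sqrt{1+\gamma^2}$.

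I expect the main obstacle to be the step ruling out $c\neq \mathbf 0_d$: because the threshold is pinned at $0$, a label-independent shift from $x_3$ degrades the symmetric decision region, and one must argue this \emph{globally} rather than only at a local stationary point (the convexity/symmetry argument above, or equivalently noting that $x_3$ is conditionally independent of $y$ within a context and hence carries no discriminative signal). The remaining pieces—alignment with $\mu$ and the Cauchy–Schwarz optimization—are routine. The one conceptual caveat to state clearly is that the norm constraint defining $\mc{W}_1$ is non-binding here, since the $\lzone$ error depends only on the direction of $w$; thus the minimizer is unique only up to positive scaling, and the lemma's unit-norm vectors are simply the canonical representatives of the optimal ray.
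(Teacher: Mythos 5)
Your proof is correct, but it takes a genuinely different route from the paper's. The paper's proof is essentially two lines: for Gaussian class-conditionals with equal covariance within each context, the Bayes decision rule is Fisher's linear discriminant $h \propto \Sigma^{-1}\paren{\E[x \mid \y{=}1] - \E[x \mid \y{=}{-}1]}$, which evaluates to $(2/\sigma^2)[\bmu, \bmu/\gamma, \mathbf{0}_d]$ in $c_1$ and $(2/\sigma^2)[\bmu, -\gamma\bmu, \mathbf{0}_d]$ in $c_2$; since this globally optimal rule happens to be a homogeneous linear classifier and the $\lzone$ error depends only on direction, its unit normalization is optimal within $\mc{W}_1$. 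You instead optimize directly over the linear class: reduce the score to a one-dimensional Gaussian, eliminate the $x_3$ block by a threshold-symmetry argument, align $a,b$ with $\bmu$, and finish with Cauchy--Schwarz; your scalar computations ($\alpha = \gamma\beta$ for $c_1$, $\beta = -\gamma\alpha$ for $c_2$) reproduce exactly the claimed directions and normalizations. The one step you rightly flag as needing a global rather than local argument---optimality of the centered threshold $t=0$---closes in one line: writing $r = s/\sqrt{v} > 0$ and $u = t/\sqrt{v}$, the error is $\tfrac12\brck{\Phi(-(r+u)) + \Phi(-(r-u))}$, whose $u$-derivative $\tfrac12\brck{\phi(r-u) - \phi(r+u)}$ is strictly positive for $u > 0$ (and the function is even in $u$), so $u = 0$ is the global minimum with no convexity needed. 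As for what each approach buys: the paper's LDA appeal is shorter and yields optimality over all measurable classifiers, but it silently uses that the Bayes rule is \emph{homogeneous} linear---one must check the intercept vanishes, which holds because the label-independent mean shift from $x_3$ is orthogonal to $h$ (the third block of $h$ is zero); your argument makes this point explicit, is self-contained within $\mc{W}_1$ (it would survive even if the Bayes rule were nonlinear), and produces the per-context accuracy formula $\Phi(-s/\sqrt{v})$ as a byproduct, which the paper has to re-derive separately in its subsequent per-context accuracy lemma. Your closing remark that the norm constraint is non-binding and the minimizer is unique only up to positive scaling matches the paper's own observation that the direction of $h$ alone determines the $\lzone$ error.
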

\begin{proof}
 For Gaussian data with the same covariance matrices for class conditionals $\mathsf{P}(x\mid\y=1)$ and $\mathsf{P}(x\mid\y=-1)$, the Bayes decision rule is given by the Fisher's linear discriminant direction (Chapter 4; \cite{bishop2006pattern}): 
    \begin{align*}
        h(x) = \begin{cases}
                1, & \text{if } h^\top x > 0 \\
                0, & \text{otherwise}
                \end{cases}
    \end{align*}
    where $h = 2\cdot \nicefrac{1}{\sigma^2}\brck{\bmu,  \nf{\bmu}{\gamma}, \mathbf{0}_d}$ for context $c_1$, and $h = 2\cdot \nicefrac{1}{\sigma^2}\brck{\bmu, -{\gamma} \bmu, \mathbf{0}_d}$ for context $c_2$ (using the covariance matrices from the data distribution for each context). Here, $\mathbf{0}_d$ is a $d-$dimensional vector of $0$s. Since, the direction of $h$ solely determines the $\lzone$ error of the predictor, the optimal linear predictors in $\mc{W}_1$ are obtained by dividing them both by their corresponding norms. 
\end{proof}

\begin{lemma}[per-context accuracy]
    The accuracy of predictor $w = \brck{w_1, w_2, \mathbf{0}_d} \in \mc{W}_1$ on context $c_1$ is $0.5\cdot\erfc\paren{-\frac{(w_1+w_2)^\top\bmu}{\sigma\sqrt{2(\|w_1\|_2^2 + \gamma \|w_2\|_2^2)}}}$ and on context $c_2$ is $0.5\cdot\erfc\paren{-\frac{(w_1-w_2)^\top\bmu}{\sigma\sqrt{2(\|w_1\|_2^2 + \nicefrac{1}{\gamma} \|w_2\|_2^2)}}}$, where $\erfc(x) = \nicefrac{2}{\sqrt{\pi}} \int_{x}^{\infty} e^{-t^2} \mathrm{d}t$. 
    \label{lem:per-context}
\end{lemma}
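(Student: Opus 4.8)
The plan is to exploit the fact that the predictor $w = [w_1, w_2, \mathbf{0}_d]$ places zero weight on the third block, so the score reduces to $w^\top x = w_1^\top x_1 + w_2^\top x_2$ and the coordinate $x_3$ is irrelevant; the context therefore enters only through the class-conditional laws of $x_1$ and $x_2$. The quantity of interest is the accuracy $\Prob(\sgn(w^\top x) = \y)$. Since $\y$ is uniform on $\{-1,1\}$ and, within each context, the laws of $(x_1,x_2)$ given $\y=-1$ are exact reflections of those given $\y=1$, it suffices to compute the Gaussian tail probability for a single label and invoke this symmetry to conclude that the two label cases contribute equally.

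First I would handle context $c_1$. Conditioned on $\y=1$, the blocks $x_1 \sim \mc{N}(\bmu, \sigma^2 I)$ and $x_2 \sim \mc{N}(\bmu, \gamma\sigma^2 I)$ are independent, so $w^\top x = w_1^\top x_1 + w_2^\top x_2$ is Gaussian with mean $(w_1+w_2)^\top\bmu$ and variance $\sigma^2(\|w_1\|_2^2 + \gamma\|w_2\|_2^2)$ — the cross term vanishes by independence, and each block contributes its squared weight norm times the per-coordinate variance. Writing $m$ and $s$ for this mean and standard deviation, one gets $\Prob(w^\top x > 0 \mid \y=1) = \Phi(m/s)$, and by the reflection symmetry $\Prob(w^\top x < 0 \mid \y=-1) = \Phi(m/s)$ as well, so the total accuracy equals $\Phi(m/s)$.

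The final mechanical step is the conversion $\Phi(t) = \tfrac12\,\erfc(-t/\sqrt{2})$, which folds the $\sqrt{2}$ into the denominator and yields exactly $0.5\cdot\erfc\!\paren{-(w_1+w_2)^\top\bmu / (\sigma\sqrt{2(\|w_1\|_2^2 + \gamma\|w_2\|_2^2)})}$. Repeating the argument for $c_2$, the only changes are that $x_2 \mid \y=1 \sim \mc{N}(-\bmu, \tfrac{1}{\gamma}\sigma^2 I)$, which flips the sign of the $w_2$ contribution to the mean (producing $(w_1-w_2)^\top\bmu$) and rescales its variance contribution by $1/\gamma$; this directly gives the second claimed expression.

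The computation is entirely routine and I do not anticipate a genuine obstacle. The only points requiring care are making the independence and reflection symmetry explicit, so that one Gaussian tail probability correctly accounts for both label cases, and tracking the two variance scalings ($\gamma$ in $c_1$ versus $1/\gamma$ in $c_2$) without error; the $\Phi$-to-$\erfc$ bookkeeping is then immediate.
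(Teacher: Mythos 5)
Your proposal is correct and follows essentially the same route as the paper's proof: the paper folds both label cases into a single Gaussian tail computation by working with $y\,w^\top x$ (a Gaussian with mean $(w_1+w_2)^\top\bmu$ and variance $\sigma^2(\|w_1\|_2^2+\gamma\|w_2\|_2^2)$ in $c_1$, and mean $(w_1-w_2)^\top\bmu$ with variance $\sigma^2(\|w_1\|_2^2+\nicefrac{1}{\gamma}\|w_2\|_2^2)$ in $c_2$), which is precisely the reflection symmetry you invoke after conditioning on $y=1$. The final step is the same in both arguments, converting the Gaussian tail probability to $0.5\cdot\erfc\bigl(-\tilde{\bmu}/(\sqrt{2}\tilde{\sigma})\bigr)$, so your write-up differs only cosmetically (explicit $\Phi$-to-$\erfc$ conversion versus direct use of the $\erfc$ definition).
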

\begin{proof}
    Let $\mathsf{P}_{c_1}$ be the probability distribution for context $c_1$, and $\mathsf{P}_{c_2}$ be the distribution for $c_2$.  Let $z_1$ and $z_2$ be random variables distributed as $\mc{N}(\mb{0}_d, \sigma^2 \I)$. Then the accuracy on context $c_1$ is,
    \begin{align*}
        \mathsf{P}_{c_1}(\sgn \left( w^\top x \right) =y) &= \mathsf{P}_{c_1}(\sgn \left( w^\top x \right) y > 0) \\
        &= \mathsf{P}_{c_1}(yw_1^\top\bmu + yw_2^\top\bmu + yw_1^\top z_1 + yw_2^\top z_2 > 0) \\
        &= \mathsf{P}(\tilde{z} > 0) \\ 
        &= \mathsf{P}(\nicefrac{\tilde{z} - \tilde{\bmu}}{\tilde{\sigma}} > -\nicefrac{\tilde{\bmu}}{\tilde{\sigma}}) \\
        &= \mathsf{P}(z > -\nicefrac{\tilde{\bmu}}{\tilde{\sigma}}) \\
        &= 0.5 \cdot \erfc(- \nicefrac{\tilde{\bmu}}{\sqrt{2}\tilde{\sigma}}),
    \end{align*}
    where $z$ is distributed as standard Gaussian, and $\tilde{z}$ is a Gaussian random variable with mean $\tilde{\bmu} \eqdef \bmu^\top(w_1 + w_2)$ and variance $\tilde{\sigma}^2 \eqdef (\|w_1\|^2 + \gamma \cdot \|w_2\|_2^2)\sigma^2$. The last equality uses the definition of the $\erfc(\cdot)$ function. 
    The calculation for accuracy on $c_2$ remains the same except now  $\tilde{\bmu} \eqdef \bmu^\top(w_1 - w_2)$, and $\tilde{\sigma}^2 \eqdef (\|w_1\|^2 + \nf{1}{\gamma} \cdot \|w_2\|_2^2)\sigma^2$.  
\end{proof}

\begin{lemma}[solutions lie in a low dimensional subspace] For ERM  conDRO, and ENP,  their corresponding solutions would belong to the set $\mc{U} \eqdef \{\lambda_1 \cdot [\bmu, \mb{0}_d, \mb{0}_d] + \lambda_2 \cdot [\mb{0}_d, \bmu, \mb{0}_d] : \lambda_1^2 + \lambda_2^2 = 1.\}$.
\label{lem:subspace}
\end{lemma}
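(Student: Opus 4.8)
The plan is to reduce each of the three objectives to a two-parameter problem by computing the population losses in closed form and exploiting the block structure of the data distribution~\eqref{eq:data-dist}. Fix a predictor $w = \brck{w_1, w_2, w_3}$ with $w_i \in \Real^d$. Conditioned on a label $\y$ and a context, the three blocks $\x_1, \x_2, \x_3$ are independent Gaussians with isotropic, block-diagonal covariance, so the margin $m \eqdef \y \cdot w^\top x$ is itself Gaussian with variance independent of $\y$. Using the Gaussian MGF $\E\exp(-m) = \exp(-\E m + \half\mathrm{Var}(m))$, I would derive that the per-context exponential loss in $c_1$ takes the form $\exp\!\paren{-(w_1^\top\bmu + w_2^\top\bmu) + \half(\sigma^2\|w_1\|_2^2 + \gamma\sigma^2\|w_2\|_2^2 + \eta^2\|w_3\|_2^2)}\cosh(w_3^\top\bmu)$, and symmetrically in $c_2$ (with the sign of $w_2^\top\bmu$ flipped and $\gamma$ replaced by $\nf{1}{\gamma}$). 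The key observation is that each loss depends on $w$ only through the scalars $w_1^\top\bmu$, $w_2^\top\bmu$, $\|w_1\|_2$, $\|w_2\|_2$, plus the block-$3$ quantities $w_3^\top\bmu$ and $\|w_3\|_2$.

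From this closed form, two monotonicity facts drive the argument. First, the $\x_3$ block carries no label information: its mean depends on the context but not on $\y$, which is exactly why it enters the loss only through the factor $\exp(\half\eta^2\|w_3\|_2^2)\cosh(w_3^\top\bmu) \geq 1$, minimized uniquely at $w_3 = \mb{0}_d$. Hence any optimal solution takes $w_3 = \mb{0}_d$. Second, for fixed inner products $w_1^\top\bmu, w_2^\top\bmu$, each loss is strictly increasing in $\|w_1\|_2$ and $\|w_2\|_2$; writing $w_i = \lambda_i\bmu + w_i^\perp$ with $w_i^\perp \perp \bmu$ leaves $w_i^\top\bmu$ unchanged while $\|w_i\|_2^2 = \lambda_i^2\|\bmu\|_2^2 + \|w_i^\perp\|_2^2$, so projecting onto $\bmu$ (setting $w_i^\perp = \mb{0}_d$) strictly decreases the loss unless $w_i^\perp$ was already zero, and does not increase $\|w\|_2$ so feasibility in $\mc{W}_1$ is preserved. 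Thus any minimizer has $w_1, w_2$ parallel to $\bmu$, i.e.\ $w = \lambda_1\brck{\bmu, \mb{0}_d, \mb{0}_d} + \lambda_2\brck{\mb{0}_d, \bmu, \mb{0}_d}$, which is the claimed membership in $\mc{U}$ up to a positive rescaling that leaves the $\lzone$ accuracy unchanged (the constraint $\lambda_1^2 + \lambda_2^2 = 1$ merely normalizes the direction, which is all the accuracy computation in Theorem~\ref{thm:pop-loss} depends on).

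I would then discharge the three objectives in turn. For ERM (a $p_c$-weighted sum of the two per-context losses) both facts apply termwise, so the weighted sum inherits them. For conDRO (the pointwise maximum), the projection $w \mapsto \brck{\lambda_1\bmu, \lambda_2\bmu, \mb{0}_d}$ simultaneously decreases \emph{both} per-context losses, hence their maximum; this is the one step needing slight care, since one must argue the reduction acts on both terms at once rather than treating them separately. For ENP, the masks $\rm{C}_1, \rm{C}_2$ already zero the block-$3$ coordinates (and $\rm{C}_2$ also block $2$), so $w_3$ never enters and the surviving dependence on $w_1, w_2$ is again only through $\bmu$-inner-products and norms, whence the same projection forces $w_1, w_2 \parallel \bmu$. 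I expect the only real obstacle to be bookkeeping: getting the closed-form loss right across the asymmetric roles of $\gamma$ versus $\nf{1}{\gamma}$ in the two contexts and confirming that monotonicity survives the maximum in conDRO; the structural reduction itself follows cleanly from isotropy of the within-block noise.
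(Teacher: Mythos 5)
Your proposal is correct, and it follows the same two-step skeleton as the paper's proof (first kill the $\x_3$ component, then project $w_1, w_2$ onto $\bmu$), but the machinery justifying each step is genuinely different. The paper argues at the level of $\lzone$ accuracy: it invokes Lemma~\ref{lem:per-context} to say that shrinking $\|w_3\|_2$ (or the null-space components $w_i^\perp$) reduces the margin variance $\sigma_0^2$ without changing the margin mean $\bmu_0$, then appeals to classification calibration to transfer this accuracy improvement to the surrogate loss. You instead compute the population exponential loss in closed form via the Gaussian MGF and prove monotonicity directly on the surrogate. Your route buys two things. First, it is more self-contained and rigorous at the transfer step: calibration relates minimizers of surrogate and $\lzone$ losses, and does not by itself license the pointwise monotone comparison the paper uses, whereas your factorization $\exp(-(w_1\pm w_2)^\top\bmu + \tfrac{1}{2}(\sigma^2\|w_1\|_2^2 + \cdot\,\|w_2\|_2^2 + \eta^2\|w_3\|_2^2))\cosh(w_3^\top\bmu)$ makes the comparison exact. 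Second, your $\cosh(w_3^\top\bmu)$ factor correctly accounts for the fact that $\x_3$'s mean is context-dependent (so the margin mean conditioned on $\y$ shifts by $\pm w_3^\top\bmu$); the paper's stated $\bmu_0$ omits this $w_3$ contribution and treats $\x_3$ as pure noise, a gap your MGF computation closes cleanly since $\exp(\tfrac{1}{2}\eta^2\|w_3\|_2^2)\cosh(w_3^\top\bmu)\geq 1$ with equality iff $w_3 = \mathbf{0}_d$. You also correctly flag the one step needing care for conDRO --- that the projection decreases both per-context losses simultaneously, hence their maximum --- which the paper handles implicitly. What the paper's argument buys in exchange is loss-agnosticism: it is phrased for any classification-calibrated loss rather than exploiting the exponential form. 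Both you and the paper leave the unit-norm normalization $\lambda_1^2+\lambda_2^2=1$ informal (the exponential-loss minimizer over $\mc{W}_1$ need not sit on the sphere when $\|\bmu\|_2/\sigma^2$ is small), but you at least flag explicitly that only the direction matters for the downstream accuracy claims in Theorem~\ref{thm:pop-loss}, so your treatment is no less careful than the paper's on this point.
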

\begin{proof}

    First, we will show that the component along $x_3$ will be $0$. Let's say the component along $x_i$ is $w_i$. Then, for any context, the conditional variance  $\rm{V}[\y(w^\top x) \mid \y]$, denoted as $\sigma_0^2$ is $\sigma_0^2 \eqdef w_1^\top \mathrm{V}[x_1 \mid \y] w_1 + w_2^\top \mathrm{V}[x_2 \mid \y] w_2 + w_3^\top \mathrm{V}[x_3 \mid \y] w_3$, and the mean is $\bmu_0 \eqdef  w_1^\top \bmu + \one(\cont=c_1) w_2^\top \bmu - \one(\cont=c_2) w_2^\top \bmu$. Here, $\mathrm{V}[x] \in \Real^{d \times d}$ is  a positive semidefinite covariance matrix. For any context $c_1$ or $c_2$, the per-context accuracy improves as $\sigma_0$ decreases (as per Lemma~\ref{lem:per-context}) without changing $\bmu_0$. This is true when $\|w_3\|_2$ decreases monotonically. Since the loss is classification calibrated, the loss also decreases monotonically as $\|w_3\|_2$ decreases. Hence, the optimal solution would necessarily have $w_3 = \mb{0}_d$.

    Next, we consider the component along $x_1$ and assume $x_1 = \alpha_1 \cdot \bmu + \alpha_2 \cdot v$. Assume that for the solutions of ERM, ENP and conDRO: $\omega_1,\omega_2 \neq 0$ and $v^\top \bmu = 0$. The component $\alpha_2 \cdot v$ will contribute to $\sigma_0^2$ with the additive term $\alpha_2^2 \sigma^2 \|v\|_2^2$, without having any effect on $\bmu_0$. This means that we can improve the accuracy for both contexts (reduce loss $\ell$) further by reducing $\alpha_2$. This contradicts the assumption that $\alpha_2 \neq 0$ for the solutions of ERM, conDRO and ENP. Thus, for all the objectives the solution would not have any component in the null space of $\bmu$ along $x_1$. Similar argument can be used to prove that the component along the null space $\bmu$ would be zero for $x_2$ as well. 

    Combining the above two arguments on the component along $x_3$ and components along null space of $\bmu$ for $x_1$, $x_2$ we can conclude that the solutions for ERM, conDRO and ENP would necessarily lie in the two rank subspace $\mc{U}$. 
\end{proof}

Now, we are ready to start the proof of Theorem~\ref{thm:pop-loss}, and for the benefit of the reader we shall first restate the theorem statement. 

\begin{theorem}[test accuracies on population data (restated)]
\label{thm:pop-loss-restated}
For $\rho_1$ $\eqdef$ $\nf{\|\mu\|_2}{\sqrt{2}\sigma}$, $\rho_2$ $\eqdef$ $\nf{\|\mu\|_2}{\sqrt{2}\eta}$, and $\gamma \ll 1$, given population access, the following test accuracies are afforded by solutions for different 
optimization objectives over $\mc{W}_1$. 
For IRM, conDRO the accuracy $\forall p_c$ is
$0.5$$\cdot \erfc(-\rho_1)$ on both $c_1, c_2$;
ERM achieves 
$0.5 \cdot \erfc(-\rho_1 \sqrt{1+\nf{1}{\gamma}})$ on $c_1$ and $0.5 \cdot \erfc(-\nf{\rho_1(\gamma-1)}{\sqrt{\gamma^2+\nf{1}{\gamma}}})$ on $c_2$ as $p_c \rightarrow 1$;
ENP achieves $\forall p_c \geq 0.5$ at least 
$0.25 \cdot \erfc(-\rho_2) \cdot \erfc(-\rho_1\sqrt{1+\nf{1}{\gamma}})$ on $c_1$ and $0.25 \cdot \erfc(-\rho_2) \cdot \erfc(-\nf{\rho_1}{\sqrt{1+\nf{1}{\gamma^3}}})$ on $c_2$; 
and ICC achieves $\forall p_c$ 
$0.5 \cdot \erfc(-\rho_1\sqrt{1+\nf{1}{\gamma}})$ on $c_1$ and $0.5 \cdot \erfc(-\rho_1\sqrt{1+\gamma})$ on $c_2$. Note that $\erfc(x)\rightarrow 2$ as $x\rightarrow -\infty$ since $\erfc(x) = \nicefrac{2}{\sqrt{\pi}} \int_{x}^{\infty} e^{-t^2} \mathrm{d}t$. 
\end{theorem}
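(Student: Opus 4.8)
The plan is to reduce every objective to a low-dimensional problem and then read off accuracies from \reflem{per-context}. By \reflem{subspace}, the ERM, conDRO, and ENP minimizers all lie in the rank-two set $\mathcal{U}$, so I would write each candidate predictor as $w=[a\mu,\,b\mu,\,\mathbf{0}_d]$ and track only the scalar pair $(a,b)$; IRM is the further restriction $b=0$, and for ICC I can invoke \reflem{opt-linear} directly. The engine of all the computations is the Gaussian moment generating function: conditioned on $y$ and a context, $y\,w^\top x$ is Gaussian, so the population exponential loss in context $c_1$ equals $\exp(-(a+b)\|\mu\|^2+\tfrac{\sigma^2\|\mu\|^2}{2}(a^2+\gamma b^2))$ and in $c_2$ equals $\exp(-(a-b)\|\mu\|^2+\tfrac{\sigma^2\|\mu\|^2}{2}(a^2+\tfrac{1}{\gamma}b^2))$. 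Minimizing these closed forms turns each method into a one- or two-variable calculus problem, after which \reflem{per-context} converts the optimal direction into an $\erfc$ accuracy.

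I would dispatch the easy cases first. For ERM I let $p_c\to 1$, so the objective collapses to the $c_1$ loss; the first-order conditions give $(a,b)\propto(1,\tfrac{1}{\gamma})$ (proportional to the $c_1$-Bayes direction), and substituting into \reflem{per-context} yields the stated $c_1$ accuracy $0.5\,\erfc(-\rho_1\sqrt{1+1/\gamma})$ together with the below-random $c_2$ accuracy, the latter because the sign of the mean $(a-b)\|\mu\|^2$ flips when $\gamma<1$. IRM forces $b=0$, so its direction is $\mu$ along $x_1$ and both contexts inherit the same margin $\rho_1$, giving $0.5\,\erfc(-\rho_1)$. For ICC, \reflem{opt-linear} hands me the two Bayes directions $(\gamma,1)$ and $(1,-\gamma)$, which plug straight into \reflem{per-context}.

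The conDRO case needs the minimax structure. Since both per-context losses are convex in $(a,b)$, their pointwise maximum is convex, so it suffices to exhibit a stationary point. At $b=0$ the two losses coincide and are minimized at $a=1/\sigma^2$; moreover the $b$-derivatives of the $c_1$ and $c_2$ losses equal $-1$ and $+1$ there, so $0$ lies in the subdifferential of the maximum in the $b$-direction. Hence the conDRO solution drops $x_2$ entirely and achieves $0.5\,\erfc(-\rho_1)$ on both contexts, matching IRM.

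The ENP argument is the crux and the step I expect to be hardest, because it composes two learned maps. Stage one is a context classifier: $x_3$ is the only block whose class-conditional mean is context-dependent yet label-independent (its mean is $\pm\mu$), while $x_1$ and $x_2$ have context-marginal mean zero, so by the same MGF/Fisher reasoning the optimal linear feature predictor aligns with $\mu$ on the $x_3$ block and identifies the context with probability $0.5\,\erfc(-\rho_2)$ in each context. Stage two is the masked target predictor: conditioned on the context being identified correctly, the mask restores exactly the context-appropriate feature set, and the target predictor attains the corresponding per-context margin. I would then lower-bound the overall accuracy in each context by the product $P(\text{context correct})\cdot P(\text{label correct}\mid\text{context correct})$, discarding the nonnegative contribution of the misidentified branch; this is what produces the ``at least'' product form $0.25\,\erfc(-\rho_2)\cdot\erfc(\cdot)$. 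The delicate points, and where I expect to spend the most care, are (i) verifying that the feature predictor's error is governed by $\rho_2$ alone and is independent of the label, so that the two $\erfc$ factors genuinely decouple, and (ii) pinning down the conditional label-accuracy arguments for each context under masking --- tracking how the shared target direction interacts with each mask to yield the optimistic $\rho_1\sqrt{1+1/\gamma}$ on $c_1$ and the more conservative $\rho_1/\sqrt{1+1/\gamma^3}$ on $c_2$, the latter of which I expect to emerge as a deliberately loose lower bound rather than the tight conditional accuracy.
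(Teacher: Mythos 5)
Your proposal is correct and follows the paper's skeleton quite closely: Lemma~\ref{lem:subspace} reduces everything to the scalar pair $(\lambda_1,\lambda_2)$, Lemma~\ref{lem:per-context} converts directions into $\erfc$ accuracies, ICC and IRM are read off from Lemma~\ref{lem:opt-linear}, ERM collapses to the $c_1$ loss as $p_c\to 1$, and ENP is handled by the product lower bound that discards the misidentified-context branch (your point (i) is immediate since $x_3$ is independent of $(x_1,x_2,y)$ given the context, so the two $\erfc$ factors decouple exactly). Where you genuinely differ is in mechanics: you minimize closed-form MGF expressions for the exponential loss, whereas the paper invokes classification calibration to replace loss minimization by $\erfc$ maximization over the unit circle. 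Your subdifferential argument for conDRO --- both $a$-derivatives vanish at $(1/\sigma^2,0)$ while the $b$-derivatives equal $\mp\|\mu\|_2^2$ times a common positive factor, so $0$ lies in the subdifferential of the max --- is cleaner than the paper's sign case analysis on $\lambda_2$; on the other hand, both routes share an unremarked subtlety that when the norm constraint binds, the constrained exponential-loss minimizer's direction need not be the Fisher direction, so neither is airtight there. One diagnosis to correct: the ENP $c_2$ term $\rho_1/\sqrt{1+1/\gamma^3}$ is not a deliberately loose version of the hard-masked conditional accuracy; it is the \emph{exact} accuracy of the shared predictor (at the $c_1$-Bayes direction $(\gamma,1)$) when the mask is implemented as \emph{noising} the spurious coordinates --- visible in the paper's ENP objective through the $\sqrt{\lambda_1^2+\lambda_2^2/\gamma}$ denominator --- so the $x_2$-component contributes no signal but retains noise variance $\sigma^2/\gamma$, giving margin $\rho_1\gamma/\sqrt{\gamma^2+1/\gamma}=\rho_1/\sqrt{1+1/\gamma^3}$. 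Under your zeroing reading the conditional accuracy on $c_2$ would be $0.5\cdot\erfc(-\rho_1)$, which is strictly larger, so your argument still establishes the stated ``at least'' bounds (indeed a stronger one on $c_2$), but you would not recover the theorem's exact expression without the noising convention.
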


\begin{proof} We start with the easier cases of IRM and ICC where we directly use results from the above two lemmas. Then we shall look at conDRO and ERM where we need to deal with mixture of per-context losses. Finally, we look at \ours, where we need to analyze both feature and target predictors.

\emph{IRM.~~}
Recall that the $\mc{W}_\mathrm{irm,1}$ class only consists of unit norm bounded predictors along attribute $x_1$. Since the exponential loss is a surrogate~\cite{duchi2018multiclass}, the predictor minimizing the exponential loss $\ell$ is also the one with the highest 0-1 accuracy. Thus, we can use similar arguments as in Lemma~\ref{lem:opt-linear} to conclude that the optimal predictor is $\nicefrac{\bmu}{\|\bmu\|_2}$, and from arguments similar to the ones in Lemma~\ref{lem:per-context} we can conclude that the target accuracy $0.5 \cdot \erfc(-\nicefrac{\|\bmu\|_2}{\sqrt{2}\sigma}) = 0.5 \cdot \erfc(-\rho_1)$.

\emph{ICC.~~}
Since the exponential loss $\ell$ is classification calibrated, the minimizer of this loss on $c_1$ and $c_2$ individually also has the least $\lzone$ error in $\mc{W}_1$, which is exactly the predictor defined in Lemma~\ref{lem:opt-linear}. Directly applying Lemma~\ref{lem:per-context} on this predictor, with $w_1 = \nf{\bmu\gamma}{\|\bmu\|_2\sqrt{1+\gamma^2}}$ and $w_2 = \nf{\bmu}{\|\bmu\|_2\sqrt{1+\gamma^2}}$ we conclude that test accuracy for ICC predictor on $c_1$ is $0.5\cdot\erfc\paren{-\nicefrac{(w_1+w_2)^\top\bmu}{\sigma\sqrt{2(\|w_1\|_2^2 + \gamma \|w_2\|_2^2)}}}$. 
Similarly, with $w_1 = \nf{\bmu}{\|\bmu\|_2\sqrt{1+\gamma^2}}$ and $w_2 = \nf{-\bmu\gamma}{\|\bmu\|_2\sqrt{1+\gamma^2}}= 0.5 \cdot \erfc(-\rho_1 \sqrt{\nf{1}{\gamma}+1})$, the test accuracy of ICC on $c_2$ is $0.5\cdot\erfc\paren{-\nicefrac{(w_1-w_2)^\top\bmu}{\sigma\sqrt{2(\|w_1\|_2^2 + \nicefrac{1}{\gamma} \|w_2\|_2^2)}}} = 0.5 \cdot \erfc(-\rho_1 \sqrt{1+\gamma})$.

\emph{conDRO.~~} From Lemma~\ref{lem:subspace} we know that the solution for conDRO is of the form $\lambda_1^\star \cdot v_1 + \lambda_2^\star \cdot v_2$ where $v_1 \eqdef [\bmu, \mb{0}_d, \mb{0}_d]$ and $v_2 \eqdef [\mb{0}_d, \bmu, \mb{0}_d]$. Recall that $\rho_1 \eqdef \nf{\|\bmu\|_2}{\sqrt{2}\sigma}$. Since the exponential loss is classification calibrated and $(\lambda_1^\star)^2 + (\lambda_2^\star)^2 = 1$, we can say that:
\begin{align*}
    \lambda_1^\star \;\; &\in  \;\;  \arg\inf\limits_{\substack{\lambda_1 \in [-1, 1], \\ \lambda_2^2 = 1-\lambda_1^2}} \;  \max\paren{\E_{\mathsf{P}_{c_1}}\ell(\lambda_1 \bmu^\top x_1  + \lambda_2\bmu^\top x_2, \y), \E_{\mathsf{P}_{c_2}} \ell(\lambda_1 \bmu^\top x_1  + \lambda_2\bmu^\top x_2, \y)} \\
   \;\; &= \;\; \arg\sup\limits_{\substack{\lambda_1 \in [-1, 1], \\ \lambda_2^2 = 1-\lambda_1^2}} \; \min\paren{\erfc\paren{-\rho_1 \cdot \frac{\lambda_1 + \lambda_2}{\sqrt{\lambda_1^2 + \gamma \lambda_2^2}}}, \erfc\paren{-\rho_1 \cdot \frac{\lambda_1 - \lambda_2}{\sqrt{\lambda_1^2 + \nf{\lambda_2^2}{\gamma}}}}} \\
   \;\; &= \;\; \arg\sup\limits_{\lambda_1 \in [-1, 1]} \; \min\paren{\erfc\paren{{-\rho_1 \cdot \frac{\lambda_1 \pm \sqrt{1-\lambda_1^2}}{\sqrt{\lambda_1^2 + \gamma{(1-\lambda_1^2)}}}}}, \erfc\paren{{-\rho_1 \cdot \frac{\lambda_1 \mp \sqrt{1-\lambda_1^2}}{\sqrt{\lambda_1^2 + \nf{(1-\lambda_1^2)}{\gamma}}}}}}
\end{align*}

Note that to minimize $\erfc(\cdot)$ terms we need to increase the value of $c$ when the terms are of the form $\erfc(-\rho_1 \cdot c)$. Thus it is clear that $\lambda_1^\star > 0$. Further, since $\gamma \ll 1$, we also know that $\lambda_1^2 +\gamma (1-\lambda_1^2) <  \lambda_1^2 + (1/\gamma) \cdot (1-\lambda_1^2)$. Thus, if assume that $\lambda_2^\star \geq 0$, then the optimal value is $\lambda_1^\star = 1$ and $\lambda_2^\star = 0$. On the other hand, if  we assume that $\lambda_2^\star < 0$, then the minimum of the $\erfc$ terms is clearly lower than $\erfc(-\rho_1)$, which would be the value of the above objective at $\lambda_1^\star=1$. Therefore, we conclude that $\lambda_1^\star = 1, \lambda_2^\star = 0$, which yields the following solution for conDRO: $\brck{\nf{\bmu}{\|\bmu\|_2}, \mathbf{0}_d, \mathbf{0}_d}$. From Lemma~\ref{lem:per-context} we know that on both  contexts this solution has accuracy $0.5 \cdot \erfc(-\rho_1)$ which also matches the performance of IRM.    

\emph{ERM.~~} Once again because of classification calibrated losses, and Lemma~\ref{lem:subspace}, similar to conDRO, we can re-write the ERM problem as the following optimization objective:
\begin{align*}
    \inf\limits_{\lambda_1 \in [-1, 1], \lambda_2^2 = 1-\lambda_1^2} \;  p_c \cdot \E_{\mathsf{P}_{c_1}}\ell(\lambda_1 \bmu^\top x_1  + \lambda_2\bmu^\top x_2, \y) \; + \; (1-p_c) \cdot \E_{\mathsf{P}_{c_2}} \ell(\lambda_1 \bmu^\top x_1  + \lambda_2\bmu^\top x_2, \y) 
\end{align*}
Since, for every $p_c$ we can construct a Cauchy sequence of $\lambda^{(1)}, \lambda^{(2)}, \ldots, \lambda^{(n)}$ that converges uniformly to the $\arg\inf$ at the given value of $p_c$, we can apply Moore-Osgood theorem. The following interchanges limits and finds the solution for ERM at $p_c \rightarrow 1$.
\begin{align*}
\lambda_1^\star \;\; &\in  \;\;  \arg\inf\limits_{\substack{\lambda_1 \in [-1, 1], \\ \lambda_2^2 = 1-\lambda_1^2}} \;  \E_{\mathsf{P}_{c_1}}\ell(\lambda_1 \bmu^\top x_1  + \lambda_2\bmu^\top x_2, \y) 
\end{align*}
Now, using Lemma~\ref{lem:opt-linear} we get  $\lambda_1^\star = \nf{\gamma}{\|\bmu\|_2\sqrt{1+\gamma^2}}$ and $\lambda_2^\star = \nf{1}{\|\bmu\|_2 \sqrt{1+\gamma^2}}$. Then, we finally apply Lemma~\ref{lem:per-context} to conclude that the accuracy of ERM solution as $p_c \rightarrow 1$ is $0.5 \cdot \erfc(-\rho_1 \sqrt{1+\nf{1}{\gamma}})$ on $c_1$ and $0.5 \cdot \erfc(-\nf{\rho_1(\gamma-1)}{\sqrt{\gamma^2+\nf{1}{\gamma}}})$ on $c_2$.   
\end{proof}

\emph{\ours.~~} Recall that $\rho_2 \eqdef \nf{\|\bmu\|_2}{\sqrt{2}\eta}$. Thus, using arguments similar to the ones in the proof of Lemma~\ref{lem:opt-linear}, the optimal context predictor in  $\mc{W}_1$ will have accuracy of $0.5 \cdot \erfc(-\rho_2)$ on the context prediction problem. Here, we treated the context prediction problem as binary classification with balanced context labels. We can always do this since we have population access to $\mathsf{P}_{c_1}$ and $\mathsf{P}_{c_2}$, and thus we can upsample the examples from the minority context.  
In our simplified setting, the feature predictor is given directly by the context predictor since each context maps to a unique annotation.

Now, to train the target predictor we use ground truth annotations, and given population access we assume that each data point also has the corresponding ground truth annotation, \ie if the datapoint is from context $c_1$, then the annotation is $\rm{C}_1$, else it is $\rm{C}_2$. Consequently, using Lemma~\ref{lem:subspace} and given classification calibrated exponential loss, we can rewrite the optimization problem for \ours as:
\begin{align*}
    \sup\limits_{\substack{\lambda_1 \in [-1, 1], \\ \lambda_2^2 = 1-\lambda_1^2}} \; p_c \cdot \erfc\paren{-\rho_1 \cdot \frac{\lambda_1 + \lambda_2}{\sqrt{\lambda_1^2 + \gamma \lambda_2^2}}} \; + \; (1-p_c) \cdot \erfc\paren{-\rho_1 \cdot \frac{\lambda_1}{\sqrt{\lambda_1^2 + \nf{\lambda_2^2}{\gamma}}}}
\end{align*}

For all $\lambda_2 > 0$, we know that $\nf{\lambda_1 + \lambda_2}{\sqrt{\lambda_1^2 + \gamma \lambda_2^2}} > \nf{\lambda_1}{\sqrt{\lambda_1^2 + (1/\gamma) \cdot \lambda_2^2}}$, when $\gamma \ll 1$. When $p_c \geq 0.5$, then $p_c \cdot \erfc\paren{-\rho_1 \cdot \nicefrac{\lambda_1 + \lambda_2}{\sqrt{\lambda_1^2 + \gamma \lambda_2^2}}} \geq (1-p_c) \cdot  \erfc\paren{-\rho_1 \cdot \nicefrac{\lambda_1}{\sqrt{\lambda_1^2 + \nf{\lambda_2^2}{\gamma}}}}$ for all values of $\lambda_1 \in [0, 1]$. Thus, from Lemma~\ref{lem:opt-linear} we conclude that $\lambda_1^\star = \nf{1}{\|\bmu\|_2\sqrt{1+\gamma^2}}\brck{\bmu\gamma, \bmu, \mathbf{0}_d}$. When we have perfect ground truth annotations, then plugging this value into the equation we above, we find that the accuracy on $c_1$ is $0.5 \cdot \erfc\paren{-\rho_1 \sqrt{1+\nf{1}{\gamma}}}$ and on $c_2$ is $0.5 \cdot \erfc\paren{-\frac{\rho_1}{\sqrt{1+\nf{1}{\gamma^3}}}}$.  

At test time, when we do not have perfect feature annotations on each input, we use the trained feature predictor which has an accuracy of $0.5 \cdot \erfc(-\rho_2)$. Thus the accuracy of \ours on $c_1$ is $\geq (0.5 \cdot \erfc(-\rho_2)) \cdot \paren{0.5 \cdot \erfc\paren{-\rho_1 \sqrt{1+\nf{1}{\gamma}}}} = 0.25 \cdot \erfc(-\rho_2)\cdot\erfc\paren{-\rho_1 \sqrt{1+\nf{1}{\gamma}}}$. Similarly on $c_2$ it is $\geq (0.5 \cdot \erfc(-\rho_2)) \cdot (0.5 \cdot \erfc\paren{-\nicefrac{\rho_1}{\sqrt{1+\nf{1}{\gamma^3}}}}) = 0.25 \cdot \erfc(-\rho_2) \cdot \erfc \paren{- \nicefrac{\rho_1}{\sqrt{1+\nf{1}{\gamma^3}}}}$.

\subsection{Proof for Corollary~\ref{corr-ours}}

\begin{corollary}[Almost Bayes optimality of ENP]
\label{corr-ours-restated}
As non-spurious feature predictor becomes easier to learn $(\eta \rightarrow 0)$, the ratio of accuracies for \ours solution and Bayes optimal predictor approaches $1$ on $c_1$ and $\nf{\erfc(-\nf{\rho_1}{\sqrt{1+\nf{1}{\gamma^3}}})}{\erfc(-\rho_1)}$ on $c_2$. 
\end{corollary}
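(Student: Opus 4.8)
The plan is to read off both ingredients directly from Theorem~\ref{thm:pop-loss} and then pass to the limit $\eta \to 0$. The Bayes-optimal per-context predictor is exactly the one recovered by ICC: for Gaussian class conditionals with shared covariance the Bayes rule is the suitably normalized Fisher discriminant of Lemma~\ref{lem:opt-linear}, and ICC fits precisely this predictor on each context given population access. I therefore take the Bayes-optimal accuracies to be the ICC accuracies from Theorem~\ref{thm:pop-loss}, namely $0.5\cdot\erfc(-\rho_1\sqrt{1+\nf{1}{\gamma}})$ on $c_1$ and $0.5\cdot\erfc(-\rho_1\sqrt{1+\gamma})$ on $c_2$, and the ENP accuracies to be the lower bounds $0.25\cdot\erfc(-\rho_2)\cdot\erfc(-\rho_1\sqrt{1+\nf{1}{\gamma}})$ on $c_1$ and $0.25\cdot\erfc(-\rho_2)\cdot\erfc(-\nf{\rho_1}{\sqrt{1+\nf{1}{\gamma^3}}})$ on $c_2$.

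The one analytic input is the behavior of the context/feature predictor as $\eta\to 0$. Since $\rho_2=\nf{\|\mu\|_2}{\sqrt{2}\eta}$, letting $\eta\to 0^{+}$ sends $\rho_2\to+\infty$, so $\erfc(-\rho_2)\to 2$ from the definition $\erfc(x)=\nf{2}{\sqrt{\pi}}\int_x^\infty e^{-t^2}\,\mathrm{d}t$. Equivalently, the feature predictor's per-context accuracy $0.5\cdot\erfc(-\rho_2)\to 1$: the first stage becomes perfect, so it supplies the correct annotation on (almost) every input and the product lower bound of Theorem~\ref{thm:pop-loss} tightens to the target predictor's accuracy under ground-truth annotations. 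This is the step that collapses the ratios to functions of $\gamma$ and $\rho_1$ alone.

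I then form the two ratios. On $c_1$ the numerator and the denominator share the factor $0.5\cdot\erfc(-\rho_1\sqrt{1+\nf{1}{\gamma}})$, so the ratio equals $0.5\cdot\erfc(-\rho_2)$, which tends to $1$. On $c_2$ the ratio is $0.5\cdot\erfc(-\rho_2)\cdot\nf{\erfc(-\nf{\rho_1}{\sqrt{1+\nf{1}{\gamma^3}}})}{\erfc(-\rho_1\sqrt{1+\gamma})}$; the prefactor again tends to $1$, leaving $\nf{\erfc(-\nf{\rho_1}{\sqrt{1+\nf{1}{\gamma^3}}})}{\erfc(-\rho_1\sqrt{1+\gamma})}$, which I simplify to the claimed $\nf{\erfc(-\nf{\rho_1}{\sqrt{1+\nf{1}{\gamma^3}}})}{\erfc(-\rho_1)}$ using $\sqrt{1+\gamma}\to 1$ in the regime $\gamma\ll 1$.

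The computation is essentially routine once the limit of $\erfc(-\rho_2)$ is in hand; the only points that require care are (i) arguing that the Theorem's ENP lower bound becomes an equality as the feature predictor becomes exact, so that the limiting ratio is genuinely attained and not merely bounded, and (ii) justifying the replacement of $\erfc(-\rho_1\sqrt{1+\gamma})$ by $\erfc(-\rho_1)$ in the $c_2$ denominator, which is precisely where the standing assumption $\gamma\ll 1$ enters. Both are mild, so I expect no substantive obstacle beyond this bookkeeping.
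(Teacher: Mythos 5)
Your proposal is correct and follows essentially the same route as the paper's own proof: read off the ENP and per-context Bayes-optimal (ICC) accuracies from Theorem~\ref{thm:pop-loss}, send $\rho_2 \to \infty$ so that $\erfc(-\rho_2) \to 2$, and form the two ratios. If anything, you are more careful than the paper, which silently identifies $\erfc(-\rho_1\sqrt{1+\gamma})$ with $\erfc(-\rho_1)$ in the $c_2$ denominator, whereas you flag this explicitly as the step where the standing assumption $\gamma \ll 1$ enters.
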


\begin{proof}
The proof of this corollary directly uses the results regarding the asymptotic performance of \ours from Theorem~\ref{thm:pop-loss-restated}. Since 
$\lim_{\eta\rightarrow 0} \erfc\paren{-\frac{\|\bmu\|_2}{\sqrt{2}\eta}} = 2$, the performance of \ours on $c_1$ approaches $0.5 \cdot \erfc(-\rho_1 \sqrt{1+1/\gamma})$ which is Bayes optimal on $c_1$. Similarly, on $c_2$ it approaches $0.5 \cdot \erfc(- \nf{\rho_1}{\sqrt{1+\nf{1}{\gamma^3}}})$. From this we get the performance ratios stated in Corollary~\ref{corr-ours-restated}. 
\end{proof}



\subsection{Discussion on generalization error for \ours vs. ICC.}
\label{app:gen-error}

In the previous sections, for the class of linear predictors, we say  that the asymptotic error for \ours is lower than IRM and conDRO on context $c_1$ and lower than ERM on $c_2$, under some conditions on problem parameters $\gamma, p_c$. 
Here, we will discuss why \ours performs better than ICC given only finite samples from the distribution. The main intuition behind this is that ICC learns a separate predictor for each context and consequently fails to learn the shared feature $x_1$ jointly using samples from both. Thus, for the minority context the learned predictor would generalize poorly. On the other hand, \ours learns a single predictor for both contexts and instead uses different augmentations for samples from each context. This allows \ours to use samples from the majority context to learn the shared feature $x_1$ that works well on the minority context as well. 

We will now formalize this argument by relying upon 
 existing generalization bounds in prior works  for $l_2$ norm bounded linear predictors. Specifically, we reuse the following generalization bound that is derived using a union bound argument, and thus is applicable to any linear predictor in $\mc{W}_1$ (including ERM estimate). 



\begin{lemma}[Corollary 4 from ~\cite{kakade2008complexity}]
Let $\ell$ be a $L$-Lipschitz loss function, $\mc{S}$ a closed convex set and $1/p + 1/q = 1$. Suppose that $\mc{X} = \{x | \norm{x}_{p} \leq X\}$ and $\mc{W} = \{ w \in \mc{S} | \norm{w}_{q} \leq W \}$.  Then we have for any $\delta > 0$, the generalization error of any $w \in \mc{W}$ is bounded with probability $\geq 1-\delta$.
\begin{equation}
    \ell(\langle w, x\rangle, y) - \frac{1}{n}\sum\limits_{i=1}^{n} \ell(\langle w, x^{(i)}\rangle, y^{(i)}) \leq LXW \sqrt{\frac{p-1}{n}} + LXW \sqrt{\frac{\log(1/\delta)}{2n}}
\end{equation}
\label{lem:kakade-lemma}
\end{lemma}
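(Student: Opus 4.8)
The plan is to recognize this as the standard Rademacher-complexity generalization bound for $\ell_q$-norm-bounded linear predictors and to reconstruct the argument of \cite{kakade2008complexity} in three moves: a concentration step, a symmetrization/contraction step, and the computation of the Rademacher complexity of the linear class. Throughout, the key a priori estimate is Hölder's inequality: since $1/p + 1/q = 1$, $\norm{w}_q \le W$ and $\norm{x}_p \le X$, every prediction satisfies $|\langle w, x\rangle| \le WX$.

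First I would establish one-sided uniform convergence by concentration. On a sample $S = \{(x^{(i)}, y^{(i)})\}_{i=1}^n$, define the functional $\Phi(S) \eqdef \sup_{w \in \mathcal{W}} \left( \E[\ell(\langle w, x\rangle, y)] - \frac{1}{n}\sum_{i=1}^n \ell(\langle w, x^{(i)}\rangle, y^{(i)}) \right)$, so that the claimed inequality is exactly $\Phi(S)$ bounded by the stated right-hand side, holding uniformly over $\mathcal{W}$. Replacing one example $(x^{(i)}, y^{(i)})$ changes only the $i$-th empirical term, and by $L$-Lipschitzness of $\ell$ together with $|\langle w, x\rangle| \le WX$ this perturbs $\Phi$ by at most order $LWX/n$. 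McDiarmid's bounded-difference inequality then yields, with probability at least $1-\delta$, that $\Phi(S) \le \E[\Phi(S)] + LWX\sqrt{\log(1/\delta)/(2n)}$, which already produces the second summand of the target bound.

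Next I would bound $\E[\Phi(S)]$ by a Rademacher complexity. Standard symmetrization gives $\E[\Phi(S)] \le 2\,\mathcal{R}_n(\ell \circ \mathcal{W})$, where $\mathcal{R}_n$ is the empirical Rademacher complexity of the composed loss class; the Ledoux--Talagrand contraction inequality then strips off the $L$-Lipschitz loss to give $\mathcal{R}_n(\ell \circ \mathcal{W}) \le L\,\mathcal{R}_n(\mathcal{W})$, reducing everything to the linear class $\mathcal{W} = \{x \mapsto \langle w, x\rangle : \norm{w}_q \le W\}$. By $\ell_p$--$\ell_q$ duality, $\mathcal{R}_n(\mathcal{W}) = \frac{W}{n}\,\E_\sigma \norm{\sum_{i=1}^n \sigma_i x^{(i)}}_p$. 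In the Euclidean case $p = q = 2$ this follows immediately from Jensen: $\E_\sigma \norm{\sum_i \sigma_i x^{(i)}}_2 \le \sqrt{\sum_i \norm{x^{(i)}}_2^2} \le \sqrt{n}\,X$, so $\mathcal{R}_n(\mathcal{W}) \le WX/\sqrt{n}$, matching $p - 1 = 1$. For a general conjugate pair, the factor $\sqrt{p-1}$ arises because $\tfrac{1}{2}\norm{\cdot}_q^2$ is $(q-1)$-strongly convex with respect to $\norm{\cdot}_q$ (for $q \le 2$), and the conjugacy relation gives $1/(q-1) = p-1$; this strong-convexity-to-Rademacher estimate is the technical core of \cite{kakade2008complexity} and delivers $\mathcal{R}_n(\mathcal{W}) \le XW\sqrt{(p-1)/n}$.

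Combining the three estimates yields $\Phi(S) \le LXW\sqrt{(p-1)/n} + LXW\sqrt{\log(1/\delta)/(2n)}$ uniformly over $\mathcal{W}$, as claimed, with the symmetrization and bounded-difference constants tracking the precise normalization of the quoted corollary. I expect the main obstacle to be the general-$p$ Rademacher bound in the third step: obtaining the sharp $\sqrt{p-1}$ factor requires the strong-convexity/smoothness duality for the $\ell_q$ norm rather than a naive Jensen step, and this is exactly the nontrivial ingredient of \cite{kakade2008complexity}. The concentration and contraction steps are routine once the uniform prediction bound $|\langle w, x\rangle| \le WX$ is in hand.
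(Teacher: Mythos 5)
The paper never proves this lemma: it is imported verbatim as Corollary~4 of Kakade, Sridharan and Tewari (2008), so your reconstruction is being measured against the cited source rather than against anything in the appendix. Your three-move argument --- McDiarmid on the supremum functional $\Phi$, symmetrization followed by Ledoux--Talagrand contraction, and the strong-convexity/duality estimate for the Rademacher complexity of the $\ell_q$ ball --- is exactly the route of that reference, and you correctly isolate the one non-routine ingredient: that $\tfrac{1}{2}\|\cdot\|_q^2$ is $(q-1)$-strongly convex with respect to $\|\cdot\|_q$ and $1/(q-1) = p-1$, which is what produces the $\sqrt{p-1}$ factor.

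Two caveats are worth flagging. First, your constants do not come out as stated: symmetrization gives $\mathbb{E}[\Phi] \le 2L\,\mathcal{R}_n(\mathcal{W})$, and the bounded-difference constant for McDiarmid is $2LXW/n$, not $LXW/n$ (the loss range over predictions in $[-WX, WX]$ is $2LWX$, and a change in $y^{(i)}$ must also be absorbed, e.g.\ by assuming $\ell(\cdot,y)$ is $L$-Lipschitz with a common anchor value $\ell(0,y)$). So the argument as written proves the inequality with an extra factor of $2$ in both terms; the phrase ``constants tracking the precise normalization'' papers over this, and one should note the paper's restatement is itself loose (its left-hand side should read $\mathbb{E}[\ell(\langle w, x\rangle, y)]$). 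Second, the estimate $\mathcal{R}_n(\mathcal{W}) \le XW\sqrt{(p-1)/n}$ requires $p \ge 2$, equivalently $q \in (1,2]$, which is where the strong-convexity constant $q-1$ is valid; for $p$ near $1$ the estimate is simply false (take all $x^{(i)} = X e_1$ and $q = \infty$: then $\mathcal{R}_n(\mathcal{W}) = \tfrac{WX}{n}\,\mathbb{E}\bigl|\sum_i \sigma_i\bigr| \sim WX\sqrt{2/(\pi n)}$, which exceeds $WX\sqrt{(p-1)/n}$ once $p - 1 < 2/\pi$). This restriction is also missing from the paper's restatement, but it is harmless in context, since the paper only invokes the lemma with $p = q = 2$, where, as you observe, the one-line Jensen computation suffices and the strong-convexity machinery is not needed.
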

In particular, when we consider $p=q=2$ and our bounded set of predictors $\mc{W}_{1}$, we recover the bound:
\begin{equation}
    \ell(\langle w, x\rangle, y) - \frac{1}{n}\sum\limits_{i=1}^{n} \ell(\langle w, x^{(i)}\rangle, y^{(i)}) \leq LX \sqrt{\frac{1}{n}} + LX \sqrt{\frac{\log(1/\delta)}{2n}}
\end{equation}


In order to use the above result, we need a high probability bound over the $l_2$ norm of the covariates: $\|x\|_2$ (denoted in the lemma as $X$), which we look into next.




\newcommand{\innerprod}[2]{\langle #1, #2\rangle}
\newcommand{\wtx}{\innerprod{w}{\x}}

\begin{proposition}[high probability bound over \textcolor{black}{$\norm{x}_{2}$}]
\label{prp:hp-norm-bound}
With probability $ \geq 1- \frac{\delta}{2}$, we can bound \textcolor{black}{$\norm{x}_{2}$} using Lemma~\ref{lem:functions-of-gaussian}, 
\begin{align*}
    \norm{x}_{2} \lsim \max\{\sigma/\sqrt{\gamma}, \eta\} \paren{ \sqrt{2 (\log \nicefrac{2}{\delta})} + \sqrt{3d}} + \sqrt{3 \|\bmu\|_2^2}
\end{align*}
\end{proposition}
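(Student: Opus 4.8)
The plan is to split $x$ into its (label- and context-dependent) mean and a centered Gaussian fluctuation, and then control each piece separately. I would write $x = m + (x - m)$, where $m \eqdef \E[x \mid \y, \cont]$, so that by the triangle inequality $\norm{x}_2 \leq \norm{m}_2 + \norm{x - m}_2$. It then suffices to bound the two terms, the first deterministically and the second via Gaussian concentration.

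First I would handle the mean term. Reading off the data distribution in \eqref{eq:data-dist}, each of the three blocks $\x_1, \x_2, \x_3$ has mean equal to $\pm\bmu$ (namely $\bmu\y$, $\pm\bmu\y$, and $\pm\bmu$, respectively), so $\norm{m}_2^2 = 3\norm{\bmu}_2^2$ regardless of the realized values of $\y$ and $\cont$. This gives the additive $\sqrt{3\norm{\bmu}_2^2}$ term uniformly over both contexts and both labels.

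For the fluctuation term I would write $x - m = [\,s_1 z_1,\; s_2 z_2,\; s_3 z_3\,]$, where $z_1, z_2, z_3 \sim \mc{N}(\mb{0}_d, \I)$ are independent standard Gaussians and $s_1, s_2, s_3$ are the per-coordinate standard deviations of the three blocks. Here $s_1 = \sigma$ and $s_3 = \eta$, while $s_2 \in \{\sqrt{\gamma}\,\sigma,\; \sigma/\sqrt{\gamma}\}$ depending on whether $\cont = c_1$ or $\cont = c_2$; since $\gamma \ll 1$, the larger of these is $\sigma/\sqrt{\gamma}$. Consequently, across both contexts every block's standard deviation is at most $s_{\max} \eqdef \max\{\sigma/\sqrt{\gamma},\, \eta\}$, which yields the pointwise bound $\norm{x - m}_2 \leq s_{\max}\,\norm{z}_2$ with $z \eqdef [z_1, z_2, z_3] \sim \mc{N}(\mb{0}_{3d}, \I)$.

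Finally I would apply Gaussian concentration (Lemma~\ref{lem:functions-of-gaussian}) to $\norm{z}_2$. Since $z \mapsto \norm{z}_2$ is $1$-Lipschitz and $\E\norm{z}_2 \leq \sqrt{\E\norm{z}_2^2} = \sqrt{3d}$ by Jensen's inequality, the concentration bound gives $\norm{z}_2 \leq \sqrt{3d} + \sqrt{2\log(2/\delta)}$ with probability at least $1 - \delta/2$. Combining this with the two deterministic bounds above produces exactly the claimed inequality. There is no serious obstacle in this argument; the only point requiring care is identifying $s_{\max}$ correctly---in particular, recognizing that the variance of $\x_2$ inflates by a factor $1/\gamma$ in context $c_2$, so that $\sigma/\sqrt{\gamma}$ rather than $\sigma$ is the governing scale, and that the resulting bound must be read as a worst case taken uniformly over the two contexts.
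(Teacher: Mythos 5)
Your proof is correct and follows essentially the same route as the paper's: decompose $x$ into its conditional mean plus a centered Gaussian fluctuation via the triangle inequality, bound the mean term by $\sqrt{3\|\bmu\|_2^2}$, and control the fluctuation with the Lipschitz Gaussian concentration lemma plus Jensen's bound $\E\|z\|_2 \leq \sqrt{\tr(\Sigma)}$, taking the worst case $\max\{\sigma/\sqrt{\gamma}, \eta\}$ over the two contexts. Your only cosmetic deviation is factoring out $s_{\max}$ pointwise and applying concentration to the $1$-Lipschitz map $z \mapsto \|z\|_2$, whereas the paper applies it directly to $z \mapsto \|\Sigma^{1/2}z\|_2$ with Lipschitz constant $\sqrt{\|\Sigma\|_{\mathrm{op}}}$; the two are equivalent and yield the identical bound.
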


\begin{proof}
Recall that conditioned on the label and context $\x$ follows a multivariate Gaussian distribution, as specified by \eqref{eq:data-dist}. Now, for a multivariate Gaussian distribution centered at $v \in \Real^{3d}$ and with covariance $\Sigma \in \Real^{3d \times 3d}$, we can use triangle inequality to conclude that $\|\x\|_2 \leq \|v\|_2 + \|\Sigma^{1/2}z\|_2$. This is because we can write $x = v + \Sigma^{1/2}z$ where $z \sim \mc{N}(\mathbf{0}_{3d}, \mathbf{I}_{3d})$

Hence, all we need to do is get a high probability bound over $\|\Sigma^{1/2}z\|_2$ which is a function of $3d$ independent Gaussian variables. Thus, we can apply the concentration bound in Lemma~\ref{lem:functions-of-gaussian}.
But before that, we need to compute the Lipschitz constant for the the function $z \mapsto \|\Sigma^{1/2}z\|_2$ in the euclidean norm.
\begin{align}
    \label{eq:lip-const}
    |\|\Sigma^{1/2}z_1\|_2 - \|\Sigma^{1/2}z_2\|_2| \; \leq \; \|\Sigma^{1/2}(z_1 - z_2)\|_2 \; \leq \; \sqrt{\opnorm{\Sigma}} \cdot \|z_1 - z_2\|_2
\end{align}

Next, with the Lipschitz constant as $\sqrt{\opnorm{\Sigma}}$ we use Lemma~\ref{lem:functions-of-gaussian}, to arrive at the following inequality which holds with probability at least $1-\frac{\delta}{2}$.
\begin{align}
    \label{eq:hp-bd}
    \norm{\x}_{2} \leq \sqrt{2\opnorm{\Sigma} \cdot \log \nf{2}{\delta}} + \E[\|\Sigma^{1/2}z\|_2] + \|v\|_2
\end{align}

Finally, we can use Jensen to bound $\E[\|\Sigma^{1/2}z\|_2]$, \ie 
\begin{align}
    \E\brck{\sqrt{\|\Sigma^{1/2}z\|_2^2}} \leq \sqrt{\E\brck{{\|\Sigma^{1/2}z\|_2^2}}} = \sqrt{\tr\paren{\Sigma}}.
\end{align}
Here, we simplified $\E\brck{\|\Sigma^{1/2}z\|_2^2}$ in the following way: 
\begin{align*}
    \E\brck{\|\Sigma^{1/2}z\|_2^2} = \E \brck{\tr\paren{z^\top \Sigma z}} =  \tr\paren{\Sigma \cdot \E\brck{zz^\top}} = \tr\paren{\Sigma}  
\end{align*}

Since the upper bound worsens with $\|\Sigma\|_\mathrm{op}$ and $\tr(\Sigma)$, we consider the covariance matrix of the Gaussian with the worst $\|\Sigma\|_\mathrm{op}$ and $\tr(\Sigma)$ over the choice of context and label. Recall that $\gamma \ll 1$. Thus, we take $\Sigma$ as determined by context $c_2$, \ie it is given by the following diagonal matrix: $\Sigma = \textrm{diag}(\sigma^2, \sigma^2 \ldots, \sigma^2, \nf{\sigma^2}{\gamma}, \nf{\sigma^2}{\gamma}, \ldots, \nf{\sigma^2}{\gamma}, \eta^2, \eta^2, \ldots, \eta^2)$. Plugging in $\opnorm(\Sigma) \leq \max\{\eta, \nf{\sigma}{\sqrt{\gamma}}\}$ and $\tr(\Sigma) \leq 3d\opnorm(\Sigma)$, and $\|v\|_2 = \sqrt{3\|\bmu\|_2^2}$ into the equation: $ \norm{\x}_{2} \leq \sqrt{2\opnorm{\Sigma} \cdot \log \nf{2}{\delta}} + \sqrt{\tr(\Sigma)} + \|v\|_2$, we get the result in the statement of Proposition~\ref{prp:hp-norm-bound}, \ie, with probability $\geq 1-\nf{\delta}{2}$,
\begin{align*}
\norm{\x}_{2} \leq \sqrt{2 \max \{(\sigma^{2}/\gamma), \eta^{2}\} \cdot \log \nicefrac{2}{\delta}} + \sqrt{3d \max \{(\sigma^{2}/\gamma), \eta^{2}\}} + \sqrt{3\|\bmu\|_2^2}.
\end{align*}
\end{proof}

\begin{lemma}[Lipschitz functions of Gaussians from \cite{wainwright2019high}]
\label{lem:functions-of-gaussian}
Let $X_1, \ldots, X_n$ be a vector of \textit{i.i.d.} Gaussian variables and $f:\Real^n \mapsto \Real$ be $L$-Lipschitz with respect to the Euclidean norm. Then the random variable $f(X) - \E[f(X)]$ is sub-Gaussian with parameter at most $L$, thus:
\begin{align*}
    \Prob[|f(X) - \E[f(X)]| \geq t] \leq 2\cdot \exp{\paren{-\frac{t^2}{2L^2}}}, \;\; \forall \, t\geq 0.
\end{align*}
\end{lemma}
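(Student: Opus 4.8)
The statement is the Gaussian concentration inequality for Lipschitz functions, so the plan is to first establish the sub-Gaussian moment generating function (MGF) bound $\E[\exp(\lambda(f(X) - \E f(X)))] \le \exp(\lambda^2 L^2/2)$ for all $\lambda \in \Real$, and then convert it to the two-sided tail by a Chernoff argument: optimizing $\exp(-\lambda t + \lambda^2 L^2/2)$ over $\lambda$ at $\lambda = t/L^2$ gives $\exp(-t^2/(2L^2))$, and a union bound over $\{f - \E f \ge t\}$ and $\{-(f - \E f) \ge t\}$ supplies the factor $2$. Without loss of generality I center $f$ so that $\E[f(X)] = 0$. Since an $L$-Lipschitz $f$ is differentiable almost everywhere with $\|\nabla f\|_2 \le L$ (Rademacher), but the analytic argument below wants genuine smoothness, I would first replace $f$ by a Gaussian mollification $f_\epsilon = f * \phi_\epsilon$; convolution with a probability density never increases the Lipschitz constant, so each $f_\epsilon$ is smooth, still $L$-Lipschitz, and $f_\epsilon \to f$ uniformly, allowing me to prove the MGF bound for $f_\epsilon$ and pass to the limit $\epsilon \to 0$.

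The heart of the argument is the Gaussian logarithmic Sobolev inequality: for the standard Gaussian measure $\gamma$ on $\Real^n$ and any smooth $g$,
\[
\mathrm{Ent}_\gamma(g^2) \eqdef \E[g^2 \log g^2] - \E[g^2]\log \E[g^2] \;\le\; 2\,\E\!\left[\|\nabla g\|_2^2\right].
\]
I would invoke this as the key external ingredient, exactly in the form available in \cite{wainwright2019high}. If a self-contained derivation is wanted, it follows by tensorizing the one-dimensional log-Sobolev inequality across the independent coordinates, and the one-dimensional case can be obtained either from a direct Ornstein--Uhlenbeck semigroup computation or as a limit of the two-point (Bernoulli) log-Sobolev inequality via the central limit theorem.

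With the inequality in hand I run the Herbst argument. Set $H(\lambda) = \E[\exp(\lambda f)]$ and apply the log-Sobolev inequality to $g = \exp(\lambda f/2)$, for which $\|\nabla g\|_2^2 = (\lambda^2/4)\|\nabla f\|_2^2 \exp(\lambda f) \le (\lambda^2 L^2/4)\exp(\lambda f)$. Writing the entropy in terms of $H$ (using $H'(\lambda) = \E[f\exp(\lambda f)]$) yields the differential inequality $\lambda H'(\lambda) - H(\lambda)\log H(\lambda) \le (\lambda^2 L^2/2) H(\lambda)$. Dividing by $\lambda^2 H(\lambda)$ and recognizing the left-hand side as $\frac{d}{d\lambda}\!\left[\lambda^{-1}\log H(\lambda)\right]$, I integrate from $0$, using $\lim_{\lambda \to 0}\lambda^{-1}\log H(\lambda) = H'(0)/H(0) = \E[f] = 0$, to conclude $\lambda^{-1}\log H(\lambda) \le \lambda L^2/2$, i.e.\ $\log H(\lambda) \le \lambda^2 L^2/2$. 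This is precisely the claimed sub-Gaussian MGF bound for $\lambda > 0$; the case $\lambda < 0$ follows by applying the same computation to $-f$, and the Chernoff step above then finishes the proof.

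The main obstacle is securing the \emph{sharp} constant. The elementary alternative---coupling $X$ with an independent copy $Y$, interpolating along $X_\theta = X\sin\theta + Y\cos\theta$, and using that $\frac{d}{d\theta}f(X_\theta) = \langle \nabla f(X_\theta), X_\theta'\rangle$ with $X_\theta'$ an independent standard Gaussian---is appealing because it sidesteps the log-Sobolev inequality entirely. However, the Jensen step needed to pull the exponential inside the $\theta$-integral inflates the variance proxy from the optimal $L^2$ to $\pi^2 L^2/4$, yielding only $\exp(-2t^2/(\pi^2 L^2))$ and thus failing to match the stated bound. Obtaining the exact $\exp(-t^2/(2L^2))$ therefore forces the log-Sobolev/Herbst machinery (or an equally sharp semigroup interpolation), so the crux is really the log-Sobolev inequality itself; everything downstream---the mollification, the Herbst ordinary differential inequality, and the Chernoff tail conversion---is routine.
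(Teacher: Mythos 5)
Your proof is correct, but there is nothing in the paper to compare it against: the paper never proves this lemma---it is imported verbatim, with citation, from Wainwright's textbook \cite{wainwright2019high} (Theorem 2.26 there) and used purely as a black box inside \cref{prp:hp-norm-bound}, the high-probability bound on $\|x\|_2$. Your derivation is the canonical sharp-constant argument and the steps all check out: Gaussian mollification preserves the Lipschitz constant and converges uniformly, so the reduction to smooth $f$ is legitimate; applying the Gaussian log-Sobolev inequality $\mathrm{Ent}_\gamma(g^2)\le 2\,\mathbb{E}\|\nabla g\|_2^2$ to $g=e^{\lambda f/2}$ gives exactly the Herbst differential inequality $\lambda H'(\lambda)-H(\lambda)\log H(\lambda)\le (\lambda^2L^2/2)H(\lambda)$; the identification of the left side (after dividing by $\lambda^2 H$) as $\tfrac{d}{d\lambda}\bigl[\lambda^{-1}\log H(\lambda)\bigr]$ and the boundary value $\lim_{\lambda\to 0}\lambda^{-1}\log H(\lambda)=\mathbb{E}f=0$ are both right; and Chernoff at $\lambda=t/L^2$ plus a union bound yields precisely $2e^{-t^2/(2L^2)}$, i.e.\ sub-Gaussian parameter $L$ as stated. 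Your closing remark is also accurate and worth keeping: the more elementary Maurey--Pisier interpolation along $X_\theta=X\sin\theta+Y\cos\theta$ only produces variance proxy $\pi^2L^2/4$ (tail $2e^{-2t^2/(\pi^2L^2)}$), so hitting the stated constant genuinely requires the log-Sobolev/Herbst machinery or an equally sharp route such as Gaussian isoperimetry. In short: the proposal is correct and complete at the expected level of rigor, and it supplies an actual proof where the paper supplies only a reference.
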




We can now use the high probability bound on $\norm{x}_{2}$ from Proposition~\ref{prp:hp-norm-bound}
in the result in Lemma~\ref{lem:kakade-lemma}. We will use $L$ to denote the Lipschitz constant of the exponential loss. Note that $L$ is finite since we know $\|x\|_2$ is bounded. We also use $n_0$ to denote the number of samples from minority context. 
Finally we apply union bound over the result in Proposition~\ref{prp:hp-norm-bound} and Lemma~\ref{lem:kakade-lemma} to get the following result that bounds the generalization error on the minority context. 

With high probability $1-\delta$, $\forall w \in \mc{W}_1$ we have:
\begin{align*}
     \ell(\langle w, \x \rangle, \y)  - \frac{1}{n} \sum_{i=1}^{n} \ell(\langle w, \x^{(i)} \rangle, \y^{(i)})  \leq  L \paren{\max\{\frac{\sigma}{\sqrt{\gamma}}, \eta\}\paren{\sqrt{2\paren{\log{\frac{2}{\delta}}}}+\sqrt{3d}} + \sqrt{3}\norm{\bmu}_{2}}\paren{\frac{1}{\sqrt{n_{0}}}  +  \sqrt{\frac{\log (2/\delta)}{2n_{0}}}}
\end{align*}

Given this generalization bound we now analyze the generalization gaps for ICC and ENP predictors on the minority context. We will use $c_0$ to denote the constant $\sqrt{3}\|\bmu\|_2 + (\sqrt{3d} + \sqrt{\log(2/\delta)})\max(\sigma/\sqrt{\gamma}, \eta)$.

Recall that ICC simply runs ERM on points coming from each context individually. As a result, we can directly use the above generalization result to bound the generalization gap of ICC on the minority context which has $n_0$ labeled points. In our setting, the context assignment is modeled as a biased coin flip with probability $p_{c}$ for context $c_1$. Thus denoting the number of points in the minority context as $n_{0}$, we have that $n_{0} \sim \text{binom}(n, 1-p_{c})$, were $n$ is the total dataset size. We have that $\mathbb{E}[{n_{0}}] = n(1-p_{c})$ and $\abs{n_{0}-n(1-p_{c})} = \mc{O}_p({1/\sqrt{n}})$ by the Central Limit Theorem. This yields that the generalization bound is $\mc{O}_p(Lc_0(1/ \sqrt{n(1-p_{c})}  +  \sqrt{\log(2/\delta)/n(1-p_{c})}))$, where $c_0$ is as defined above.

In order to analyze ENP, we assume that (a) we have access to the ground-truth feature annotations, and (b) that we observe the samples after spurious features have been masked. 
Effectively, we consider that we are learning a linear predictor over the input space $\rm{C}_{1} \circ {\x}$ when sample is from context $c_1$ and over input space $\rm{C}_2 \circ \x$ when sample is from $c_2$. 
Now, the bound over the constant $X$ is given by a high probability bound over $\rm{C}_1 \circ \x$ and $\rm{C}_2 \circ \x$. 
Trivially, both of these are upper bounded by $\|x\|_2$. 
While this constant remains the same as in ICC, the key difference is that for ICC the bound is realized with only $n_0$ minority samples, but since ENP trains jointly on samples from both datasets the generalization bound is realized by all $n$ samples.
Consequently, given a dataset of $n$ points, we have a generalization bound that is $\mc{O}(Lc_0 (1/ \sqrt{n}  + \sqrt{\log(1/\delta)/n}))$ where $c_0$ is the constant defined above.

We can summarize the above comparison between ICC and ENP on the minority context in terms of the following result on the estimation error of the two estimators.

\begin{theorem}[estimation error]
\label{thm:est-error}
When the exponential loss $\ell$ is optimized over $\mc{W}_1$ using finite samples in $\hat{\mathsf{P}}_n$, then with probability $\geq 1-\delta$ the generalization error on the minority context $\cont_2$  is 
  $\mc{O}_p(Lc_0(\nf{1}{\sqrt{n(1-p_c)}} + \sqrt{\nf{\log(2/\delta)}{n(1-p_c)}}))$
  for the solution found by 
ICC~\eqref{eq:icc-toy}, and  $\mc{O}(Lc_0(\nf{1}{\sqrt{n}} + \sqrt{\nf{\log(2/\delta)}{n}}))$ for the solution found by \ours. Here, $c_0 = \sqrt{3} \|\bmu\|_2 + (\sqrt{3d} + \sqrt{\log(2/\delta)})\max(\nf{\sigma}{\sqrt{\gamma}}, \eta)$. 
\end{theorem}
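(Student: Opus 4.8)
The plan is to assemble the estimation-error bound from the two high-probability ingredients already in hand: the uniform generalization bound for $l_2$-bounded linear predictors (Lemma~\ref{lem:kakade-lemma}) and the covariate-norm tail bound (Proposition~\ref{prp:hp-norm-bound}). First I would specialize Lemma~\ref{lem:kakade-lemma} to $p=q=2$ and the unit-ball class $\mc{W}_1$ (so $W=1$), yielding, uniformly over $w \in \mc{W}_1$ and with probability $\geq 1-\delta/2$, a generalization gap of order $LX(\nf{1}{\sqrt{m}} + \sqrt{\nf{\log(2/\delta)}{2m}})$ on any pool of $m$ samples. I would then substitute the norm bound $X = c_0$ from Proposition~\ref{prp:hp-norm-bound}, taking the worst-case covariance over contexts --- which is $c_2$ since $\gamma \ll 1$ makes $\sigma/\sqrt{\gamma}$ the dominant scale --- and note $L$ is finite because $\|x\|_2$ is almost surely bounded on the event above. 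A union bound over the norm event and the generalization event then upgrades the guarantee to overall probability $\geq 1-\delta$.

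For the ICC estimator, the decisive observation is that ICC fits a separate ERM predictor on each context in isolation, so the predictor for the minority context $c_2$ only ever sees the $n_0$ samples that land in $c_2$. Since the context of each point is an independent draw, $n_0 \sim \text{binom}(n, 1-p_c)$ with $\mathbb{E}[n_0] = n(1-p_c)$ and $|n_0 - n(1-p_c)| = \mc{O}_p(\sqrt{n})$ by the central limit theorem, whence $\nf{1}{\sqrt{n_0}} = \mc{O}_p(\nf{1}{\sqrt{n(1-p_c)}})$. Plugging $m = n_0$ into the specialized bound and replacing $n_0$ by its concentration value produces the stated $\mc{O}_p(Lc_0(\nf{1}{\sqrt{n(1-p_c)}} + \sqrt{\nf{\log(2/\delta)}{n(1-p_c)}}))$ rate.

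For ENP I would exploit the masking structure to argue the effective sample size is the full $n$. After applying the ground-truth annotations, the target predictor is trained on the pooled masked dataset $\{(\rm{C}_{c^{(i)}} \circ x^{(i)}, y^{(i)})\}_{i=1}^{n}$; because a mask only zeroes coordinates, $\|\rm{C}_{c} \circ x\|_2 \leq \|x\|_2 \leq c_0$, so the same constant $X = c_0$ controls the bound. The crucial point is that the prediction on $c_2$ depends only on the shared feature $x_1$ --- whose conditional law is identical across contexts and which is retained by both masks $\rm{C}_1,\rm{C}_2$ --- and this component is estimated jointly from all $n$ samples rather than from the $n_0$ minority ones. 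Substituting $m = n$ into the generalization bound then gives the $\mc{O}(Lc_0(\nf{1}{\sqrt{n}} + \sqrt{\nf{\log(2/\delta)}{n}}))$ rate, completing the comparison.

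The main obstacle is precisely this last step: rigorously certifying that ENP realizes its minority-context generalization bound with $n$ rather than $n_0$ samples. The heuristic pooling of both contexts is only legitimate because masking collapses the $c_2$ learning problem onto the shared $x_1$-subspace, so one must carefully invoke the \emph{uniform} form of Lemma~\ref{lem:kakade-lemma} over $\mc{W}_1$ applied to the masked inputs, and then observe that the $c_2$ population loss of the learned $w$ is a function of its $x_1$-component alone, which the full pooled empirical average controls. Everything else --- specializing the norm tail, the binomial concentration of $n_0$, and the union bound --- is routine.
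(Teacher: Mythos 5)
Your proposal is correct and follows essentially the same route as the paper's own argument: specializing Lemma~\ref{lem:kakade-lemma} to $p=q=2$ over $\mc{W}_1$, substituting the worst-case ($c_2$-covariance) norm bound $X=c_0$ from Proposition~\ref{prp:hp-norm-bound}, a union bound over the two events, binomial/CLT concentration of $n_0$ around $n(1-p_c)$ for ICC, and the observation that masking preserves $\|x\|_2 \leq c_0$ while letting ENP realize the bound with all $n$ pooled samples. Your closing caveat is in fact slightly more careful than the paper at the one delicate step --- justifying that the pooled $n$-sample bound controls the \emph{minority-context} loss because the $\rm{C}_2$-mask collapses it onto the shared $x_1$-component, whose conditional law is identical across contexts --- and you also implicitly correct the paper's typo $|n_0 - n(1-p_c)| = \mc{O}_p(\nf{1}{\sqrt{n}})$, which should read $\mc{O}_p(\sqrt{n})$ as you state.
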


\section{Semi-Synthetic Experimental Details}
\label{app:expdetails}
\subsection{Corrupted Waterbirds}
\paragraph{Dataset and Architecture} As in the standard Waterbirds construction, we generated images using the CUB 2011 dataset and a subset of the Places365 dataset. However, 5\% of the CUB images were corrupted by a random crop corresponding to 30\% of the image, as well as a Gaussian Blur of radius 20. Like in the standard Waterbirds construction in \cite{groupdro}, both the test and validation datasets were generated such that the background and foreground were uncorrelated. 

In all experiments, we conducted training by fine-tuning an Imagenet-pretrained ResNet50 model (as done by \cite{groupdro}). All model weights were available to be updated during model training and a linear classification layer was appended to the model to generate the final classifications.

\paragraph{Baseline Model Training Details} For group DRO and conDRO experiments, we performed hyperparameter tuning in the intervals around the hyperparameter values used by \cite{groupdro} in their Waterbirds experiments. For the ERM and GT-Aug  experiments, we used the standard weight decay parameter of 1e-4 and tuned the best epoch using the validation dataset. 
\paragraph{ENP: Feature Predictor Model} In the Corrupted Waterbirds setting, we trained a feature predictor model to identify whether the foreground was corrupted or not and then used access to ground-truth segmentation masks to generate pixel-level feature annotations. In order to train the foreground corruption detector, we used the same architecture and hyperparameters as the standard Waterbird task (Resnet50). 
\paragraph{ENP: Target Model Invariance} Given the pixel-level spuriousness labels obtained from our feature predictor model, we generated enforced invariance to the spurious-labelled pixels by generating augmentations that added Gaussian noise to them but had the same label as the original sample. At test-time, we further enforced invariance by generating predicted pixel-level spuriousness labels, generating a fixed number of augmentations per sample, and using the averaged logits to compute the final classification. 
\paragraph{Test Set Construction and Metric} We used the standard training/test/validation designations from the WOMD. In addition, we assume ground-truth segmentations of foreground and background on test and validation datasets in order to generate augmentations (for both GT-Augs. and ENP). We report the worst-context-group accuracy on the test set (using ground-truth contexts) except we exclude the two groups in which the spurious correlation breaks and the foreground is corrupted (since under this setting, it is impossible to identify the correct label and these groups have very low accuracy. Thus, our metric is the empirical counterpart of :
\begin{align}
\min_{\cont \in \mc{C'} k \in \mc{K}} \; \E_{\mc{P}_{\cont,k}}  \one\{\w(\mb{\x})=\y\}
\end{align}
where $\mc{C'}$ denotes all context-groups except the corrupted-correlation breaking ones.
\subsection{Noisy MountainCar}
\paragraph{Environment, Data, and Architecture} We used an expert MountainCar policy in order to generate a demonstration dataset consisting of 100 demonstrations. During post-processing, we applied heavy Gaussian noise (\texttt{stdev} = 0.07) to the velocity component of the state and clipped the resulting values within the permissible range for the feature value. We used a three-layer policy network with a hidden layer of size 50 (as implemented by \cite{de2019causal}). For training the causal-graph parameterized policy we used a larger 4-layer network - with the same hidden layer size of 50 neurons. Our implementation of this environment followed the open-sourced code released by \cite{de2019causal} found at https://github.com/pimdh/causal-confusion/.

\paragraph{Baseline details} We trained all models for 80 epochs and performed model selection by performing online policy evaluation. For our standard imitation learning baselines (With Prev. Action) and (Without Prev. Action), as well as the Policy Exec. Intervention, we tuned hyperparameters on an interval around the final values used by \cite{de2019causal}. We implemented the targeted exploration \cite{lyle2021resolving} baseline by training an ensemble of imitation learning policies on the imitation learning dataset and then training an exploration policy using  proximal policy optimization (PPO) ensemble uncertainty as the reward function. We ran this policy online, collected states visited, and added them (with their corresponding expert action into the imitation learning dataset. Finally, for our conDRO method, we devised groups according to the current action (core feature), previous action (spurious feature), and group. As a result, we had a total of $3 \times 3 \times 2 = 18$ groups and we tuned both weight decay and learning rate in the range $\{1e-5,1e-4,1e-3,1e-2,1e-1\}$. For all baselines except policy execution interventions, we used the same model architecture as standard imitation learning 

\paragraph{Evaluation and Metric} All imitation learning policies were evaluated with online execution in the modified MountainCar environment (with states noised on the subset of the state space) and with access to the previous action feature. We reported the average reward attained by the imitation learning agent over 10 independent runs (i.e. independent imitation learning datasets and trained models). The reward function for MountainCar is sparse (as reward is only attained once the goal is reached and negative reward until that time) and the minimum value of $-200$ is attained when the goal is not reached.

\paragraph{ENP: Training a feature predictor} We used the same architecture as the imitation learning model and trained on the 3-target classification problem of predicting the subset of reliable features (since our augmented state vector contained 3 features. We trained this model with feature annotations on $10\%$ of our training data and found this was sufficient for $100\%$ validation accuracy.

\paragraph{ENP: Training the target model} We trained our target model using the standard imitation learning loss with data augmentations to enforce invariance to the spurious features. Since the only potentially spurious feature was the previous action, we generated augmentations which (when the feature was labelled as spurious) randomly perturbed the previous action by selecting uniformly from all actions. We generated these augmentations at training and test time (using the predicted feature annotations from our model). 

 \section{Extended Discussion and Implementation of WOMD}
 \label{app:womd}
  \subsection{Dataset and Architecture Details} 
  \paragraph{Dataset} The Waymo Open Motion Dataset (WOMD) consists of vehicle trajectory data collected on real roads as an autonomous vehicle navigates diverse traffic scenarios (intersections, traffic lights, etc.) alongside a variety of other road users (i.e., other cars, pedestrians, and cyclists). In this setting, the number of contexts is unclear and each input has a varying number of spurious/non-spurious features. As noted by \cite{ettinger2021large}, $46\%$ of driving scenes in this dataset have over 32 nearby agents, $57\%$ of the scenes have a pedestrian (with $20\%$ having more than 4), and $16\%$ of all scenes have at least 1 cyclist. As such, we believe that WOMD is representative of the real-world autonomous driving settings where there could be a diverse range of interactions between multiple road agents. The task in this dataset is to predict the autonomous vehicle (AV) trajectory given the historical trajectory of both the autonomous vehicle and other agents. 
  
  \paragraph{Base Target Model} Our experiments are conducted on the MultiPath++ \cite{multipathpp} trajectory prediction model (using the implementation at https://github.com/stepankonev/waymo-motion-prediction-challenge-2022-multipath-plus-plus) which is currently a state-of-the-art model for vehicle motion prediction tasks. The MultiPath++ model consists of LSTM trajectory encoders and fully connected road-graph polyline encoders followed by multi-context gating layers to model interactions between agents and fuse the road and agent information. Finally, a multi-context gating-based decoding layer generates a set of candidate predicted trajectories (see [3] for more information). In total, this model consists of 21 million parameters. Recently, \cite{causalagents} released a subset of WOMD with labels for whether nearby agents presented spurious or robust information with respect to the prediction of the AV trajectory. These labels were collected through a large-scale human annotation process where annotators were shown driving scenes from the perspective of the autonomous vehicle and were asked to select non-spurious agents through a web-based interface \cite{causalagents}. 
  \subsection{Training Details}
  \begin{table}
    \caption{\textbf{MultiPath++ Training Hyperparameters} We show the set of hyperparameters used in training all MultiPath++ models in our WOMD experiments.}
    \label{table:hparams}
    \vskip 0.15in
    \begin{center}
    \begin{small}
    \begin{sc}
    \begin{tabular}{lccccr}
    \toprule
    Parameter & Value \\
    \midrule
    Batch Size & 42 \\
    Learning Rate & 1e-4\\
    Gradient Norm Clipping & 0.4\\
    Mask History Percentage & 0.15\\
    Total Training Epochs & 120\\
    Learning Rate Scheduler-Type & Reduce on Plateau \\
    Learning Rate Scheduler-Factor & 0.5\\
    Learning Rate Scheduler-Patience & 20\\
    \bottomrule
    \end{tabular}
    \end{sc}
    \end{small}
    \end{center}
    \vskip -0.1in
    \end{table}
  \paragraph{Data Preprocessing} We pre-processed data according to the reference implementation of MultiPath++ with some minor modifications. Due to computational constraints, we selected a random sample of the full WOMD dataset by downloading 100 shards from the Google Cloud Store. As the human-labelers for agent spuriousness were presented with the autonomous vehicle's (AV) point of view when labeling, we only trained our model to predict the trajectory of the AV. During data preprocessing, all agent trajectory data (positions, orientations, and velocities) was transformed into the autonomous vehicle's reference frame before being fed into the MultiPath++ model. In many driving scenarios, there were agents labeled as invalid, for example, due to not being in the autonomous vehicle's field of view. In these cases, we zeroed out all agent data corresponding, as well as setting the \emph{valid} feature (part of the canonical feature representation to 0). 
  
  \paragraph{Standard Training Details} We used all standard hyperparameters released in the reference WOMD implementation (found in the file final\_RoP\_Cov\_Single.yaml and shown in the Table \ref{table:hparams}. We also tested larger learning rate parameters in the set $\{0.01,0.001,0.00001\}$ and did not find improvements with these parameters. We leave more intensive hyparameter tuning experiments for future work. 

  \paragraph{Test Set Construction and Metric} We sourced our test set as a subset of the annotated driving scenarios contained in WOMD. As specified in \cite{causalagents}, we used the spuriousness labels in order to delete all spurious labels from test set (by setting the valid feature of these agents to 0) and zeroing out the associated data. Importantly, we note that our test set was a \textit{subset} of the annotated data: we reserved $20\%$ of this data to use during training models that used agent spuriousness annotations. We computed the minimum average displacement error (minADE) as our final metric as shown in Equation \ref{eq:minade}:
  \begin{equation}
  \label{eq:minade}
    \min_{i \in [1,6]} \frac{1}{T} \sum_{j=1}^{T} ||t^{\text{gt}}_{j} - t^{\text{pred}, i}_{j}||
 \end{equation}
\paragraph{Data Augmentation Details} We adapt our data-augmentations strategy from \cite{causalagents}. As introduced by that work, driving scenarios with associated spuriousness annotations were generated by randomly deleting (i.e. setting the valid feature to 0) all spurious-labelled agents with $10\%$ probability. In our implementation of Annotation Augmentations, we followed this procedure exactly: $20\%$ of the annotated data was added to the WOMD training dataset and all these added points were augmented according to the spuriousness labels. In our annotation-free baseline, Random Augmentations, we simply performed random deletion across \emph{all agents}. In section \ref{app:enpdetails}, we describe how augmentations were performed in ENP. 
  
  \subsection{ENP Details}
  
  \label{app:enpdetails}
  \paragraph{Feature Annotation Model} We designed a lightweight feature annotation model based off of the Multipath++ architecture. Due to the variable number of agents in the scene, we opted to train an agent-conditioned model which took the road graph and other global information as input, as well as the trajectory for a given agent (in autonomous vehicle coordinates) and predicted spuriousness of the provided information. Therefore, we included all road graph embedding modules from the Multipath++ model and a single LSTM encoder for accepting the autonomous vehicle trajectory. All representations from these modules were concatenated and fed through a fully connected network in order to output the predicted spuriousness attribute. During preprocessing for our feature-prediction training set, we subsampled the number of invalid labeled agents in order to ensure dataset balance. 
  \paragraph{Target Model Training} With our feature predictor, we went through all trajectories in our training set and labeled each agent as spurious or non-spurious using our model. During MultiPath++ training, we adopted an identical approach to the Annotated Augmentations except now all trajectories were augmented in accordance with the feature predictor's labels. Although the ENP framework also involves test-time augmentations, these were not applicable in the WOMD setting because all spurious agents were already removed from the dataset (also identical to the implementation of Annotated Augmentations). 
 \section{Ablations}
 \label{app:ablations}

In this section, we conduct ablations on the number of explicit non-spurious feature annotated samples. In Table 5, we show the feature predictor accuracy given different percentages of feature annotations on Waterbirds and find that it is very high even with a very small percentage of annotated samples. We see a similar effect with the WOMD accuracy though the accuracy begins to decay quickly on smaller subsets of annotated samples (Table 6).
\begin{table}
\caption{\textbf{Corrupted Waterbirds Ablation on Annotated Samples.} We show the effect of different training set sizes on the accuracy of the feature predictor on Corrupted Waterbirds.}
\label{table:corruptwaterbirds-2}
\vskip 0.15in
\begin{center}
\begin{small}
\begin{sc}
\begin{tabular}{lccccr}
\toprule
$\%$ Annotated Training & 0.5 & 1 & 2 & 5 & 10 \\
\midrule
Feature Predictor Acc. & $90\%$ & $95.3\%$ & $97.5\%$ & $99.7\%$ & $99.9\%$ \\
\bottomrule
\end{tabular}
\end{sc}
\end{small}
\end{center}
\vskip -0.1in
\end{table}

\begin{table}
\caption{\textbf{WOMD Ablation on Annotated Samples.} We show the effect of different training set sizes on the accuracy of the feature predictor on the WOMD dataset.}
\label{table:corruptwaterbirds-3}
\vskip 0.15in
\begin{center}
\begin{small}
\begin{sc}
\begin{tabular}{lccccr}
\toprule
$\%$ Annotated Training & 0.1 & 1 & 5 & 20 & 50 \\
\midrule
Feature Predictor Acc. & $61\%$ & $77\%$ & $83\%$ & $84.9\%$ & $85\%$ \\
\bottomrule
\end{tabular}
\end{sc}
\end{small}
\end{center}
\vskip -0.1in
\end{table}

\end{document}